\def\BibTeX{{\rm B\kern-.05em{\sc i\kern-.025em b}\kern-.08em
    T\kern-.1667em\lower.7ex\hbox{E}\kern-.125emX}}
\newcommand{\defeq}{\vcentcolon=}
\newtheorem{theorem}{Theorem}
\newtheorem{lemma}{Lemma}
\newtheorem{definition}{Definition}
\newtheorem{remark}{Remark}
\newtheorem{proposition}{Proposition}
\title{FedCiR: Client-Invariant Representation Learning for Federated Non-IID
Features 
}
\author{\IEEEauthorblockN{Zijian Li,~\IEEEmembership{Graduate Student Member, IEEE}}, \IEEEauthorblockN{Zehong Lin,~\IEEEmembership{Member, IEEE}},\\
\IEEEauthorblockN{Jiawei Shao,~\IEEEmembership{Graduate Student Member, IEEE}},
\IEEEauthorblockN{Yuyi Mao,~\IEEEmembership{Member, IEEE}},
\IEEEauthorblockN{Jun Zhang,~\IEEEmembership{Fellow, IEEE}}
\thanks{Z. Li, Z. Lin, J. Shao, and J. Zhang are with the Department of Electronic and Computer Engineering, The Hong Kong University of Science and Technology,
Hong Kong, China (E-mail: \{zijian.li, jiawei.shao\}@connect.ust.hk, \{eezhlin, eejzhang\}@ust.hk).
Y. Mao is with the Department of Electrical and Electronic,
The Hong Kong Polytechnic University, Hong Kong, China (E-mail: yuyi-eie.mao@polyu.edu.hk) (Corresponding author: Zehong Lin.)}}
\begin{document}


\IEEEtitleabstractindextext{%
\begin{abstract}

   Federated learning (FL) is a distributed learning paradigm that maximizes the potential of data-driven models for edge devices without sharing their raw data.
   However, devices often have non-independent and identically distributed (non-IID) data, meaning their local data distributions can vary significantly. The heterogeneity in input data distributions across devices, commonly referred to as the feature shift problem, can adversely impact the training convergence and accuracy of the global model.
   To analyze the intrinsic causes of the feature shift problem, we develop a generalization error bound in FL, which motivates us to propose FedCiR, a client-invariant representation learning framework that enables clients to extract informative and client-invariant features.
   Specifically, we improve the mutual information term between representations and labels to encourage representations to carry essential classification knowledge, and diminish the mutual information term between the client set and representations conditioned on labels to promote representations of clients to be client-invariant.
    We further incorporate two regularizers into the FL framework to bound the mutual information terms with an approximate global representation distribution to compensate for the absence of the ground-truth global representation distribution, thus achieving informative and client-invariant feature extraction.
 To achieve global representation distribution approximation, we propose a data-free mechanism performed by the server without compromising privacy.
   Extensive experiments demonstrate the effectiveness of our approach in achieving client-invariant representation learning and solving the data heterogeneity issue.

\end{abstract}

 \begin{IEEEkeywords}
 Representation learning, federated learning (FL), non-independent and identically distributed (non-IID) data, edge intelligence.
 \end{IEEEkeywords}
 }
\maketitle


%
%
%
%
\IEEEraisesectionheading{\section{Introduction}\label{sec:introduction}}
\IEEEPARstart{W}{ith}
the ever-increasing popularity and deployment of edge devices, there is a tremendous amount of data generated by users on a daily basis. 
These data hold valuable insights that can be utilized to enhance various artificial intelligence (AI) applications and services.
However, with the increasing privacy concerns as well as the enormous communication costs, reckless data collection and centralized training are no longer acceptable.
Serving as an efficient and privacy-preserving training paradigm, federated learning (FL) enables distributed devices to collaboratively train a global model without revealing their private data.
The classical FL approach, represented by FedAvg \cite{fl}, has successfully demonstrated its potential in edge-AI tasks, such as autonomous driving \cite{auto_driving}, virtual reality (VR) services \cite{9741351}, and unmanned aerial vehicle control \cite{qi2022task}. 

One major obstacle to efficient FL is the data heterogeneity among devices, i.e., the data across devices are non-independently and identically distributed (non-IID). 
It has been verified that FL algorithms often suffer from degraded accuracy and slow convergence with non-IID data, which are inherently caused by deflected local updates and local optima \cite{scaffold,non_iid_1,non_iid}.
The non-IID issue in FL can be classified into two categories: label shift and feature shift \cite{kairouz2021advances}.
Label shift arises when clients have different distributions of class labels, even if the input feature space is the same. Feature shift, on the other hand, occurs when the underlying distributions of input features vary across clients, despite having a common label space.
In this work, we focus on the \emph{feature shift} problem \cite{fedbn}, which typically arises in healthcare \cite{harmoFL} and wildlife protection applications \cite{wilds} because of the device and environmental discrepancy.
For example, in wildlife protection applications where the cameras are placed in different locations to capture images of wildlife, the images from these cameras are different in tonality, rendering them skewed in features.
These discrepancies are primarily caused by distinct environmental conditions like weather and illumination levels.

\begin{figure}
    \centering \includegraphics[width=0.48\textwidth]{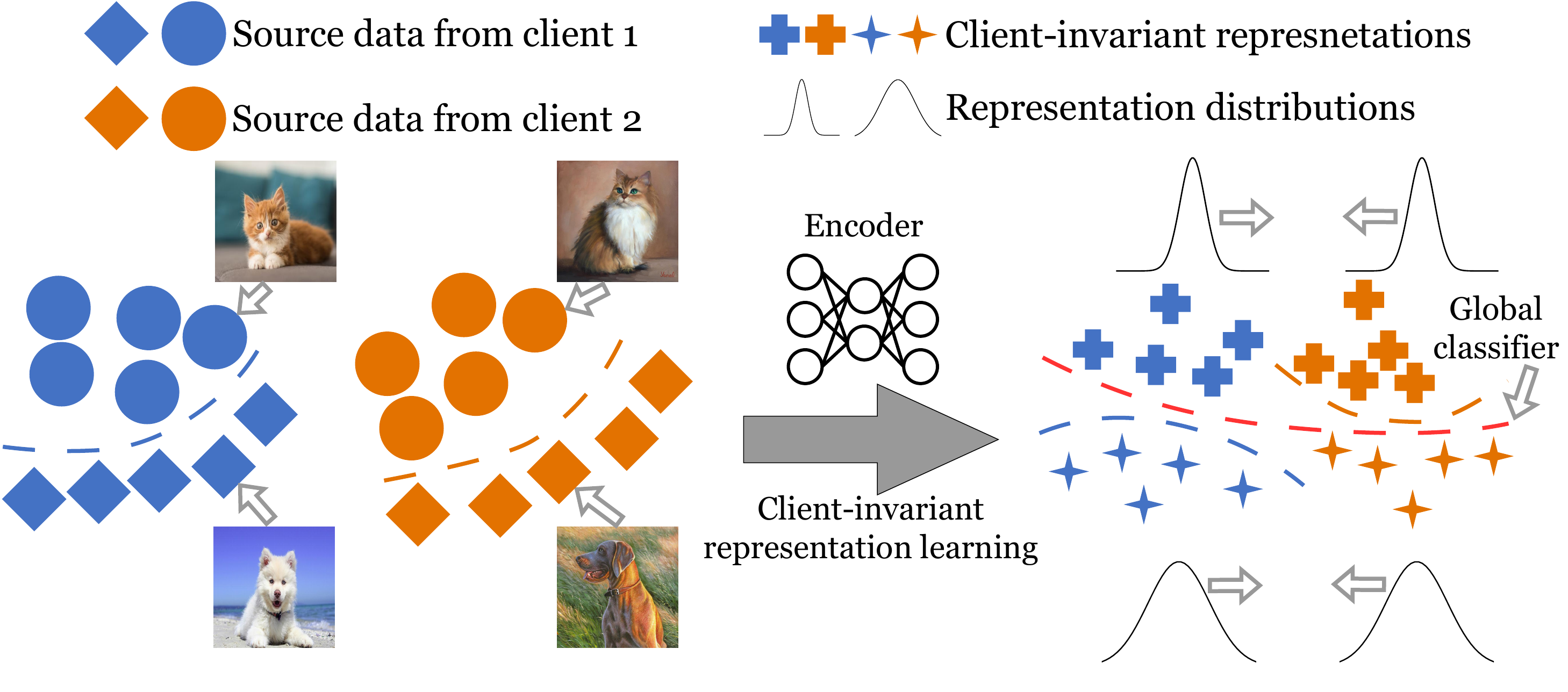}
    \caption{Overview of the proposed method with two clients. Clients extract client-invariant features from feature-skewed samples so that they can be successfully discriminated by the classifier.}
    \label{fig:idea}
\end{figure}

To solve the feature shift problem in FL, many studies proposed to introduce local constraints with the knowledge of the global model or gradient information for clients, whereby the discrepancy between local updates is mitigated.
These methods, however, focused on the algorithmic aspect and only achieved limited performance gains with heavy data heterogeneity \cite{fl_experiments}.
Meanwhile, another stream of studies endeavored to solve the feature-shift issue through synthetic data augmentation \cite{fedgen, fedgan,feddpgan, fmds_fl}, but sharing extra synthetic data among clients inevitably leads to data privacy leakage.
In addition, explicit feature alignment among clients is another effective technique to solve the feature shift problem \cite{fedproc, fedfa}.
By sharing and aggregating local features as global features, clients can align the local features with the global features, thereby reaching a common feature space for solving the feature shift problem.
Nevertheless, the local features remain susceptible to overfitting with heterogeneous data distributions, resulting in biased global features after aggregation.

The feature shift problem has been extensively studied in classical centralized training, particularly in the context of domain adaptation and domain generalization \cite{domain_adaptation_1,domain_adaptation_2}, which aim to transfer the knowledge of labeled source domains to a different target domain.
Many studies have established the generalization error bounds of the target domain on the source domains, using the source risk and the discrepancy between source and target domains. 
These bounds provide sufficient conditions to reduce the error on the domain dataset and thus address the centralized feature-skewed issue.
Consequently, many works \cite{fea_ali_1,fea_ali_5} have employed domain-invariant representation learning to reduce the generalization bound for better domain adaptation and generalization. 
However, the bounds derived in centralized settings are incompatible with FL since the source domains are distributed across clients.
With private distributed data, all the existing domain-invariant representation methods for domain adaptation lose their effectiveness without the access to all the data domains during the training process.
Therefore, it is critical to analyze the intrinsic causes of the feature-skewed issue in FL and propose effective methods to address it accordingly.

In this work, we first develop a generalization error bound for the global mixture dataset on local datasets in FL, which identifies two key factors for addressing the feature shift issue: the empirical local risks and the $\mathcal{H}$-divergence between global and local datasets.
Motivated by this analysis, we propose a \emph{federated} \emph{client-invariant} \emph{representation learning} algorithm, called \emph{FedCiR}, to reduce the generalization error bound by encouraging local models to extract informative and client-invariant features from an information-theoretic perspective.
Specifically, FedCiR introduces two mutual information terms. The first term promotes features to be informative by maximizing the mutual information between representations and labels, which addresses the information-insufficient issue in local models caused by the limited data. The second term diminishes the mutual information between the client set and representations conditioned on labels, which advances representations across clients to be client-invariant. To integrate these mutual information objectives into FL, we develop two regularizers that bound the objectives using an approximate global representation distribution. Such an approximation is needed since the ground truth distribution is unknown in FL. To achieve this approximation without compromising privacy, we propose a novel data-free mechanism using a generator, which is trained to maximize the consensus between local classifiers at the server. The first proposed regularization enhances the global view for local client representations, while the second explicitly aligns global and local distributions, ensuring representations retain just the client-invariant information needed.
Figure \ref{fig:idea} depicts the main idea of the proposed method.
The main contributions of this work are summarized as follows:
\begin{itemize}
    \item To provide a theoretical foundation for addressing the non-IID issue in FL, we establish a generalization error bound between the global and local datasets, which motivates us to propose a federated client-invariant representation learning algorithm, called FedCiR.
    \item We propose two mutual information terms to achieve informative and client-invariant representation extraction, respectively, and introduce two regularization terms based on an approximate global representation distribution to bound the mutual information terms, which can be employed in FL and encourages local representations to capture more label-discriminative and client-invariant information.
    \item We propose a data-free mechanism to approximate the global representation distribution by using a generative network to ensure the consistency of local classifiers at the server, which protects the privacy as strict as FedAvg.
    \item We conduct extensive experiments on benchmark and real-world datasets, which demonstrate that FedCiR consistently outperforms baselines in accuracy without introducing additional overhead for clients, and confirm its effectiveness in client-invariant representation learning.
\end{itemize}

\textbf{Organization:} The rest of this paper is organized as follows. In Section \ref{related_work}, we review the related works on the non-IID problem in FL and domain-invariant representation learning. Section
\ref{Preliminary} describes the feature shift problem in FL, and Section \ref{generalization_bound} illustrates the fundamental causes of the feature shift problem with a federated generalization error bound.
We propose a class-discriminative and client-invariant representation learning method for FL (FedCiR) and a data-free mechanism for global representation distribution approximation to achieve FedCiR in Section \ref{method}. 
We evaluate FedCiR via extensive simulations in Section \ref{experiments}. 
Finally, we conclude the paper in Section \ref{conclusions}.

\section{Related Works}
\label{related_work}
\textbf{FL under data heterogeneity:}
FL was originally proposed by McMahan et al. \cite{fl} as a distributed training paradigm to train a global model without disclosing local private data.
However, the statistical heterogeneity, which adversely affects the convergence and accuracy performance \cite{non_iid_1,non_iid}, quickly emerged as a major obstacle in FL.
To address this issue, many works added proximal terms to regularize local objective functions with the global knowledge \cite{feddyn,scaffold,moon,fedprox}.
For instance, FedProx \cite{fedprox} included a regularizer into local training to force the local updates to get close to the global model.
FedDyn \cite{feddyn} proposed a dynamic regularizer for each client to ensure the consistency between the stationary points of local and global empirical loss.
In addition, MOON \cite{moon} utilized contrastive learning to reduce the distance between representations output by the global and local models to correct the local training.
These methods, however, cannot fundamentally eliminate the data heterogeneity and may encounter the performance bottleneck in extreme cases with highly heterogeneous data distributions \cite{fl_experiments}.

Moreover, instead of pushing local parameters to trace the global parameters, some methods \cite{fedproc,fedproto,fedfa} attempted to align the intermediate features between clients through the global prototypes, which are computed by averaging the features for each class locally and aggregating them among clients at the server.
However, the local prototypes suffer from overfitting problems with data heterogeneity, and naively aggregating them leads to imprecise global prototypes.
In contrast, this work aims to align the representation distributions of clients with a global representation distribution, which is approximated by maximizing the consensus among local classifiers due to its absence in FL, thus improving training stability and efficiency.

Another research line for data heterogeneity relies on data augmentation \cite{non_iid_survey, what2share}. 
Many studies attempted to generate and share synthetic data using a generative model \cite{sda_fl,fedgan,feddpgan} or a mixup mechanism \cite{fedmix}.
Nevertheless, recklessly sharing synthetic samples raises extra privacy concerns.
To mitigate potential privacy leakage, recent approaches have developed data-free knowledge distillation mechanisms to generate auxiliary data with a generative model at the server, which can be used for global model calibration \cite{fedftg} and local model updating \cite{VHL,fedgen}.
These methods, however, required clients to upload the training data distributions to the server to optimize the feature generator, which is excessively vulnerable to gradient inversion attacks \cite{gradient_inversion}. Different from them, we utilize the ensemble predictions of local classifiers to train the feature generator without extra local information leakage.
More importantly, we further utilize the generator to estimate the global representation distribution for client-invariant representation extraction, which benefits solving the feature shift problem. 

\textbf{Domain-invariant representation learning:}
\label{domain_invariant}
Domain-invariant representation learning has been proven effective for domain adaptation and generalization by reducing the discrepancy between source and target domains \cite{theory_da}.
Adversarial learning is a widely used mechanism to learn the domain-invariant features by gradually disentangling the irrelevant and domain-specific features via additional discriminators \cite{fea_ali_1,adver_3,adver_4}.
The adversarial networks are able to push representation distributions of different domains to be the same, thereby enabling domain-invariant representation learning. 
FedADA \cite{fedada} and FedADG \cite{fedadg} adopted this technique to effectively solve domain adaptation and domain generalization problems, respectively, in FL.
However, due to the limited domains of data in each device, they demanded to train additional generators to generate the lacking domains of data for adversarial training, which aggravates the computation burden for devices.

Another stream of methods for domain-invariant representation learning is explicit feature distribution alignment \cite{fea_ali_4,fea_ali_5,DG_review}, which can be achieved by minimizing the distance between source and target distributions \cite{fea_ali_mmd,second_order_correlation,domainnet}.
However, these methods require the access to all the domains of data during the training process and thus are incompatible with FL. 
A recent work FedSR \cite{fedsr} used probabilistic networks to approximate feature distributions for clients and aligned them via iterative local updating and global aggregation.
Nevertheless, FedSR is sensitive to the client number and suffers from the local overfitting problem due to the distinct divergence of deterministic networks between clients caused by the non-IID data.
Instead, this work approximates the global representation distribution with the ensemble predictions of local models at the server, thus improving the stability and effectiveness of feature distribution alignment.

\section{System Model and Problem Setup}
\label{Preliminary}


Let $\mathcal{X}$, $\mathcal{Z}$, and $\mathcal{Y}$ represent the input, representation, and output spaces, respectively.
We use $X$, $Z$, and $Y$ to denote random variables that are sampled from $\mathcal{X}$, $\mathcal{Z}$, and $\mathcal{Y}$, respectively. 
We adopt $\bm{x},y$, and $\bm{z}$ to represent the corresponding instances of random variables.
In the FL setting with the client set $\mathcal{K}$, each client $k \in \mathcal{K}$ has its own distribution $\mathcal{D}_k=p(\bm{x},y|k)$, and we denote the joint distribution among clients by $p(k,\bm{x},{y})=p(k)p(y|k)p({\bm{x}}|k,y)$. 
The goal of FL is to train a global model $\bm{w}$ that minimizes the expected risks for clients as follows:
\begin{align}
    \mathcal{L}_g (\bm{w}) = \mathbb{E}_{({\bm{x}},y)\sim \Tilde{\mathcal{D}}} \ell (f(\bm{w};\bm{x}), y),
\end{align}
where $f: \mathcal{X} \rightarrow \mathcal{Y} $ is the function that maps the input data $\bm{x}$ to a predicted label $\hat{y}$ using the model $\bm{w}$, $\ell$ is a generic loss function, e.g., the cross-entropy loss, and $\Tilde{\mathcal{D}}=\cup \{ \mathcal{D}_k\}_{k\in \mathcal{K}} $ is the global mixture distribution of all the clients.
To achieve this goal, the global objective function for FL is given by:
\begin{equation}
    \mathcal{L} (\bm{w}) =  \sum _{k \in \mathcal{K}} p(k) \mathcal{L}_k(\bm{w}),
\end{equation}
where
\begin{align}
    \mathcal{L}_k (\bm{w}) = \mathbb{E}_{(\bm{x},y)\sim \mathcal{D}_k} \ell ( f(\bm{w};\bm{x}), y)
\end{align}
is the local objective function of client $k$.

In the classical FedAvg method \cite{fl}, the optimization of the model $\bm{w}$ is achieved through a collaborative effort between clients over $\mathcal{T}$ communication rounds. 
In each communication round $t\in [\mathcal{T}]$, a subset of clients $\mathcal{K}_t \subseteq \mathcal{K}$ are randomly selected by the server and download the global model $\bm{w}_t$ for local training.
Due to the impracticality of calculating full gradients on the entire local dataset $\mathcal{D}_k$, each active client $k \in \mathcal{K}_t$ performs local model updates with $E$ local steps using the mini-batch stochastic gradient descent (SGD).
To be specific, in each local step $e \in [E]$ of the $t$-th communication round, client $k$ randomly samples a batch of training data $\xi_{k}^{t,e}$ and updates local model $\bm{w}_{k}^{t,e-1}$ with the gradients $\nabla \ell (\bm{w}_k^{t, {e-1}}; \xi_k^{t,e})$ as follows:
\begin{align}
    \bm{w}_{k}^{t,e} \leftarrow \bm{w}_{k}^{t,e-1} - \eta\nabla \ell (\bm{w}_k^{t, {e-1}}; \xi_k^{t,e}),
\end{align}
where $\eta$ is the learning rate.
After the $E$-step local training, each active client $k \in \mathcal{K}_t$ sends its local model $\bm{w}_k^t$ to the server for global model aggregation as follows:
\begin{align}
    \bm{w}^{t+1} = \frac{1}{|\mathcal{K}_t|} \sum_{k \in \mathcal{K}_t} \bm{w}^{t}_{k}, \quad \bm{w^}t_k \leftarrow \bm{w}_{k}^{t,E}.
\end{align}
Afterwards, the new global model $\bm{w}^{t+1}$ is broadcast to the newly active clients for the next round of training.

Despite the success of FL in many practical applications, the heterogeneous data distributions, including label-skewed and feature-skewed distributions, still compromise the training performance in terms of convergence and accuracy \cite{fl_experiments,non_iid}. 
In this work, we aim at addressing the federated feature-skewed problem, where the clients have varying input distributions (domains) for a given label class $y_c$, i.e., $p(\bm{x}|k, y_c) \neq p(\bm{x}|k^\prime, y_c)$, where $k,k^\prime \in \mathcal{K}$ and $k \neq k^\prime$. 
Feature-skewed problems often arise in edge AI applications due to the variations in natural conditions and processing methodologies across edge devices.
For example, the images taken by smartphones in a rural area may be different from those taken by smartphones in an urban area.
With the feature distribution shift among clients, traditional FL methods, like FedAvg, attempt to learn all possible feature knowledge, including client-invariant and client-specific knowledge, during local training. 
However, the client-specific knowledge causes representation divergence, i.e., $p(\bm{z}|k,y,\bm{x})\neq p(\bm{z}|k^\prime,y,\bm{x})$, leading to an accuracy reduction.
Although many studies have proposed to alleviate the feature shift issue by reducing the divergence of local models \cite{non_iid_survey}, guidelines that can fundamentally solve this problem still remain unclear.

\section{Generalization Error Bound for Federated Learning}
\label{generalization_bound}
To analyze the fundamental causes of the federated feature-skewed problem, in this section, we first introduce a generalization error bound in domain adaptation \cite{domain_adaptation_2},
which is able to solve the feature shift problem in centralized training by minimizing the generalization error on the target domain.
Afterwards, we extend this bound for FL, which motivates us to develop a label-discriminative and client-invariant representation learning method in Section \ref{method}.

To ensure clarity and enhance understanding, following \cite{domain_adaptation_2}, we supplement some relevant definitions before delving into the analysis.

\begin{definition}
    (\textit{Hypothesis ${\bm{h}}$})
    Let $\mathcal{V}$ be a one-hot vector set that contains $N$ one-hot vectors for an $N$-class classification task.
    Given a sample $\bm{x} \in \mathcal{X}$, a hypothesis $\bm{h}: \mathcal{X} \rightarrow \mathcal{V}$ outputs the one-hot vector $\bm{v} \in \mathcal{V}$ with $\bm{v}_c=1$ and $\bm{v}_j=0, j \neq c$ to represent the class label $c \in [N]$. 
\end{definition}

\begin{definition}
    (\textit{Domain $\langle \mathcal{D},\bm{r} \rangle$})
    We define $\langle \mathcal{D},\bm{r} \rangle$ as a domain, where $\mathcal{D}$ is a data distribution over $\mathcal{X}$ and $\bm{r}: \mathcal{X} \rightarrow \mathcal{V}$ represents its ground-truth labeling function.
\end{definition}

\begin{definition}
    (\textit{Conditional function class $\mathcal{G}$} on $\mathcal{H}$) For a function class ${\mathcal{H}} \subseteq [0,1]^\mathcal{X}$ on the input space $\mathcal{X}$, we define a function class $\mathcal{G}$ conditioned on $\mathcal{H}$ for the hypotheses $\bm{g}$, and $\bm{g} (\bm{x}) \coloneqq \text{sign} (| \bm{h} (\bm{x}) - \bm{h}^\prime (\bm{x})| -m )$ with ${\bm{h}}, {\bm{h}}^\prime \in {\mathcal{H}}$ and $0 \leq m \leq 1$. 
\end{definition}

\begin{definition}
    (\textit{$\mathcal{G}$-\text{divergence}}) Given the function class $\mathcal{G}$ and any two distributions $\mathcal{D}$ and $\mathcal{D}^\prime$ over $\mathcal{X}$, the $\mathcal{G}$-\text{divergence} between the distributions of $\mathcal{D}$ and $\mathcal{D}^\prime$ is defined as $d_{\mathcal{G}} (\mathcal{D}, \mathcal{D}^\prime)=\sup_{\bm{g} \in \mathcal{G}} | \Pr_{\mathcal{D}} [\bm{g} (\bm{x})=1] -   \Pr_{\mathcal{D}^\prime} [\bm{g}(\bm{x})=1] |$.
\end{definition}

With the above definitions, we obtain the generalization error bound in domain adaptation in the following theorem. 

\begin{theorem}
\textbf{(Generalization error bound in domain adaptation \cite{domain_adaptation_2})}
\label{theorem_1}
Let $\langle \mathcal{D}_S, \bm{r}_S \rangle$ and $\langle \mathcal{D}_T, \bm{r}_T \rangle$ be the source and target domains, respectively.
For a function class $\mathcal{H} \subseteq [0,1]^\mathcal{X}$ and $\forall \bm{h} \in \mathcal{H}$, the following inequality holds:
\begin{align}
    \varepsilon_{{T}} (\bm{h}) \leq &  \varepsilon_{{S}} (\bm{h}) + d_{\mathcal{G}} (\mathcal{D}_S, \mathcal{D}_T) \nonumber\\
    &+ \min \{ \mathbb{E}_{\mathcal{D}_S} [|\bm{r}_S - \bm{r}_T|], \mathbb{E}_{\mathcal{D}_T} [|\bm{r}_S - \bm{r}_T|] \}.
\end{align}
\begin{proof}
    See \cite[Theorem 4.1]{domain_adaptation_2}.
\end{proof}
\end{theorem}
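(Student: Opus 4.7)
The plan is to prove this via the classical domain-adaptation triangle trick, specialized to the one-hot hypothesis setting. Starting from $\varepsilon_T(\bm{h}) = \mathbb{E}_{\bm{x}\sim\mathcal{D}_T}[|\bm{h}(\bm{x}) - \bm{r}_T(\bm{x})|]$, I would insert $\bm{r}_S$ by the triangle inequality to get
\begin{align}
\varepsilon_T(\bm{h}) \leq \mathbb{E}_{\mathcal{D}_T}[|\bm{h} - \bm{r}_S|] + \mathbb{E}_{\mathcal{D}_T}[|\bm{r}_S - \bm{r}_T|],
\end{align}
and then swap the measuring distribution in the first term from $\mathcal{D}_T$ to $\mathcal{D}_S$, paying a cost measured by $d_{\mathcal{G}}(\mathcal{D}_S, \mathcal{D}_T)$. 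This yields $\varepsilon_T(\bm{h}) \leq \varepsilon_S(\bm{h}) + d_{\mathcal{G}}(\mathcal{D}_S,\mathcal{D}_T) + \mathbb{E}_{\mathcal{D}_T}[|\bm{r}_S-\bm{r}_T|]$. Repeating the derivation with the two operations in reversed order --- first write $\mathbb{E}_{\mathcal{D}_T}[|\bm{h}-\bm{r}_T|] \leq \mathbb{E}_{\mathcal{D}_S}[|\bm{h}-\bm{r}_T|] + d_{\mathcal{G}}(\mathcal{D}_S,\mathcal{D}_T)$ and then insert $\bm{r}_S$ on the source side --- gives the twin bound with $\mathbb{E}_{\mathcal{D}_S}[|\bm{r}_S-\bm{r}_T|]$. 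Both inequalities hold simultaneously, so taking their minimum reproduces the statement of the theorem.

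The core technical step is the distribution swap, and this is where the specific form of $\mathcal{G}$ enters. Because $\bm{h}(\bm{x})$ and $\bm{r}_S(\bm{x})$ are both one-hot vectors in $\mathcal{V}$, the quantity $|\bm{h}(\bm{x}) - \bm{r}_S(\bm{x})|$ is a two-valued function of $\bm{x}$: it vanishes when the two vectors agree and equals a fixed positive constant otherwise. Consequently, for an appropriate threshold $m \in (0,1)$, the indicator of disagreement equals $\tfrac{1}{2}\bigl(1 + \text{sign}(|\bm{h}(\bm{x}) - \bm{r}_S(\bm{x})| - m)\bigr)$, which --- up to an affine transformation --- is exactly a $\bm{g} \in \mathcal{G}$, provided that $\bm{r}_S$ can be identified with some $\bm{h}^\prime \in \mathcal{H}$. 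Hence $\mathbb{E}_{\mathcal{D}}[|\bm{h}-\bm{r}_S|]$ is proportional to $\Pr_{\mathcal{D}}[\bm{g}(\bm{x})=1]$, and taking the supremum over $\bm{g}$ in the definition of $d_{\mathcal{G}}$ directly bounds $\bigl|\mathbb{E}_{\mathcal{D}_T}[|\bm{h}-\bm{r}_S|] - \mathbb{E}_{\mathcal{D}_S}[|\bm{h}-\bm{r}_S|]\bigr| \leq d_{\mathcal{G}}(\mathcal{D}_S,\mathcal{D}_T)$, which is the missing piece.

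The step I expect to be the main obstacle is justifying the identification of the labeling function with a hypothesis in $\mathcal{H}$: the definition of $\mathcal{G}$ requires both arguments of the sign function to come from $\mathcal{H}$, so the proof implicitly needs either $\bm{r}_S,\bm{r}_T \in \mathcal{H}$ or an enlargement of the class that captures the labeling functions. The accompanying delicate point is verifying that a single choice of threshold $m$ makes the sign function a faithful disagreement indicator uniformly over $\mathcal{H}$ in the multi-class one-hot setting --- binary Ben-David-style arguments evade this because absolute differences of $\{0,1\}$-valued hypotheses already lie in $\{0,1\}$. Once these two bookkeeping issues are resolved, the rest of the argument is a short chain of triangle inequalities and suprema, and the symmetric version comes for free by relabeling source and target.
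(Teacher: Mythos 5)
The paper does not actually prove Theorem~\ref{theorem_1}: it is imported verbatim from the cited reference, and the ``proof'' is only a pointer to \cite[Theorem 4.1]{domain_adaptation_2}. Your reconstruction is the same argument used there --- two triangle-inequality chains (insert $\bm{r}_S$ then swap the measure, and the reverse order), each paying a $d_{\mathcal{G}}$ cost for the measure swap, with the $\min$ coming from taking the better of the two chains --- so in approach you match the source exactly. Your self-diagnosed obstacles are also the genuine ones. On the realizability point: the measure-swap lemma is stated in the source for pairs $\bm{h},\bm{h}'\in\mathcal{H}$, and applying it with $\bm{h}'=\bm{r}_S$ (resp.\ $\bm{r}_T$) does implicitly require the labeling functions to be expressible in $\mathcal{H}$ (or $\mathcal{G}$ to be enlarged accordingly); this hypothesis is swept under the rug both in the source and in the present paper's restatement, so you are right to flag it rather than claim to resolve it. On the threshold point: for the real-valued class $\mathcal{H}\subseteq[0,1]^{\mathcal{X}}$ of the original theorem, the swap is justified not by a single threshold but by the layer-cake identity $\mathbb{E}[|\bm{h}-\bm{h}'|]=\int_0^1 \Pr[|\bm{h}-\bm{h}'|>m]\,dm$ followed by bounding each integrand difference by $d_{\mathcal{G}}$; your single-threshold shortcut is valid only in the two-valued one-hot setting, and there it introduces the multiplicative constant $c=|\bm{v}-\bm{v}'|\in\{2,\sqrt{2}\}$ that you wave away as ``proportional to'' --- if $c>1$ the swap actually costs $c\,d_{\mathcal{G}}$, not $d_{\mathcal{G}}$, so the stated bound only follows after renormalizing the error or the divergence. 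That mismatch is inherited from the paper's own tension between Definition~1 (one-hot outputs) and the hypothesis class $\mathcal{H}\subseteq[0,1]^{\mathcal{X}}$, but your proof as written does not close it.
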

The error bound in Theorem \ref{theorem_1} reveals the intrinsic causes of the feature-skewed issue in terms of the source risk, distribution discrepancy between the source and target domains, and labeling function distance between the source and target domains in domain adaptation. 
To reduce the discrepancy between source and target domains, some studies \cite{fea_ali_5,fea_ali_mmd} proposed to learn an encoder $\bm{w}_1: \mathcal{X}\rightarrow \mathcal{Z}$ to map the source distribution $\mathcal{D}_S$ and target distribution $\mathcal{D}_T$ to a common representation distribution $\mathcal{R}$ over $\mathcal{Z}$ such that the induced representations of source and target domains are closer and thereby better classified by a new hypothesis $\hat{\bm{h}}: \mathcal{Z} \rightarrow \mathcal{V}$.

However, in FL, the target distribution refers to the global mixture distribution among all the clients, i.e., $\Tilde{\mathcal{D}} = \mathcal{D}_T =\cup \{ \mathcal{D}_{k}\}_{k\in \mathcal{K}}$, which leads to a global representation distribution $\Tilde{\mathcal{R}} =\cup \{ \mathcal{R}_{k}\}_{k\in \mathcal{K}}$.
Following prior works \cite{fea_ali_4,domain_adaptation_2,covariate_shift}, we assume that the data distribution among clients is \emph{covariate shift}, i.e., the labeling functions among clients are the same, leading to a global labeling function $\Tilde{\bm{r}} \defeq  \bm{r}_k = \bm{r}_{k^\prime}$, where $ k, k^\prime \in \mathcal{K}$ and $k \neq k^\prime$.
This is reasonable because, for example, a dog image is annotated as ``dog" according to the same criteria based on its shape and outline, even if at different clients. 
Furthermore, the source domains are distributed across clients 
and cannot be shared with each other.
To establish the connection between the source and target representations w.r.t. the hypothesis $\hat{\bm{h}}$, we derive an error bound for FL in the following theorem.
\begin{theorem}
\textbf{(Federated generalization error bound)}
\label{theorem_2}
Let $\langle \mathcal{D}_{k}, \bm{r}_{k} \rangle$ and $\langle \Tilde{\mathcal{D}}, \Tilde{\bm{r}} \rangle$ be the domain of client $k$ and global domain in the FL setting, respectively.
For any convex function class $\hat{\mathcal{H}} \subseteq [0,1]^\mathcal{Z}$, convex risk function $\varepsilon (\cdot)$, and $\forall \hat{\bm{h}} \in \hat{\mathcal{H}}$, the following inequality holds:
\begin{align}
    &\varepsilon (\hat{\bm{h}}) 
    \leq \sum_{k\in \mathcal{K}} p(k) \varepsilon_{{k}} (\hat{\bm{h}}_k) + \sum_{k\in \mathcal{K}} p(k) d_{\hat{\mathcal{G}}} (\mathcal{R}_{k}, \Tilde{\mathcal{R}}),
\end{align}
where $\hat{\mathcal{G}}$ is defined as the function class conditioned on $\hat{\mathcal{H}}$ for the hypotheses $\hat{\bm{g}}$ that $\hat{\bm{g}} (\bm{z}) \coloneqq \text{sign} (| \hat{\bm{h}} (\bm{z}) - \hat{\bm{h}}^\prime (\bm{z})| -m )$ with ${\hat{\bm{h}}}, {\hat{\bm{h}}}^\prime \in {\hat{\mathcal{H}}}$ and $0 \leq m \leq 1$, and $\mathcal{R}_{k}$ and $\Tilde{\mathcal{R}}$ are the induced representation distributions of $\mathcal{D}_{k}$ and $\Tilde{\mathcal{D}}$, respectively.
\begin{proof}
    \text{See Appendix \ref{theorem_2_proof}.}
\end{proof}
\end{theorem}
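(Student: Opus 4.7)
The plan is to reduce Theorem~\ref{theorem_2} to a client-wise invocation of Theorem~\ref{theorem_1} in the representation space $\mathcal{Z}$, followed by an aggregation step that exploits convexity of both $\hat{\mathcal{H}}$ and $\varepsilon(\cdot)$. First I would transfer Theorem~\ref{theorem_1} from the input space $\mathcal{X}$ to the representation space $\mathcal{Z}$. Since $\mathcal{R}_k$ and $\Tilde{\mathcal{R}}$ are the distributions induced by the encoder on $\mathcal{D}_k$ and $\Tilde{\mathcal{D}}$, and $\hat{\mathcal{H}}, \hat{\mathcal{G}}$ are defined over $\mathcal{Z}$ in exact analogy to $\mathcal{H}, \mathcal{G}$ over $\mathcal{X}$, Theorem~\ref{theorem_1} transfers verbatim with source $\langle \mathcal{R}_k, \bm{r}_k \rangle$ and target $\langle \Tilde{\mathcal{R}}, \Tilde{\bm{r}} \rangle$. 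Invoking the covariate-shift assumption $\bm{r}_k = \Tilde{\bm{r}}$ stated just before the theorem collapses the labeling-function discrepancy $\min\{\mathbb{E}_{\mathcal{R}_k}[|\bm{r}_k - \Tilde{\bm{r}}|],\, \mathbb{E}_{\Tilde{\mathcal{R}}}[|\bm{r}_k - \Tilde{\bm{r}}|]\}$ to zero, yielding, for each $\hat{\bm{h}}_k \in \hat{\mathcal{H}}$ and each $k \in \mathcal{K}$, the per-client bound $\varepsilon(\hat{\bm{h}}_k) \leq \varepsilon_k(\hat{\bm{h}}_k) + d_{\hat{\mathcal{G}}}(\mathcal{R}_k, \Tilde{\mathcal{R}})$, where on the left $\varepsilon$ is evaluated on the global domain $\langle \Tilde{\mathcal{R}}, \Tilde{\bm{r}} \rangle$.

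Next I would perform the federated aggregation step. Convexity of $\hat{\mathcal{H}}$ guarantees that the aggregate $\hat{\bm{h}} \defeq \sum_{k\in\mathcal{K}} p(k)\,\hat{\bm{h}}_k$, i.e., the FedAvg-style model obtained by mixing local classifiers, still lies in $\hat{\mathcal{H}}$. Convexity of $\varepsilon(\cdot)$ in its hypothesis argument, which holds because the pointwise loss (e.g., the absolute value used in Theorem~\ref{theorem_1}) is convex, then gives via Jensen's inequality $\varepsilon(\hat{\bm{h}}) \leq \sum_{k\in\mathcal{K}} p(k)\,\varepsilon(\hat{\bm{h}}_k)$. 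Chaining this with the per-client bound from the previous step and pulling the weights $p(k)$ through produces exactly the claimed inequality $\varepsilon(\hat{\bm{h}}) \leq \sum_{k} p(k)\,\varepsilon_k(\hat{\bm{h}}_k) + \sum_{k} p(k)\, d_{\hat{\mathcal{G}}}(\mathcal{R}_k, \Tilde{\mathcal{R}})$.

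The main obstacle is making sure the two uses of convexity are both licensed and that the aggregation step is tight. Convexity of $\hat{\mathcal{H}}$ is explicit in the hypothesis; convexity of $\varepsilon(\cdot)$ follows from convexity of the pointwise loss. The more delicate point is reconciling the ``for all $\hat{\bm{h}} \in \hat{\mathcal{H}}$'' quantifier in the statement with the implicit choice $\hat{\bm{h}} = \sum_k p(k)\hat{\bm{h}}_k$ on which the bound is genuinely informative; I would flag this identification explicitly at the start of the proof so that the local hypotheses $\hat{\bm{h}}_k$ on the right-hand side are tied to the aggregated global hypothesis on the left-hand side. A secondary bookkeeping item is the mixture identity $\Tilde{\mathcal{R}} = \sum_k p(k)\,\mathcal{R}_k$ implicit in the notation $\Tilde{\mathcal{R}} = \cup_k \mathcal{R}_k$, which I would state once and then use freely so that the per-client risks assemble correctly into the global risk when summed against $p(k)$.
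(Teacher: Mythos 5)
Your proposal is correct and follows essentially the same route as the paper's proof: identify the global hypothesis with the mixture $\hat{\bm{h}}=\sum_{k}p(k)\hat{\bm{h}}_k$, use convexity of $\hat{\mathcal{H}}$ and $\varepsilon(\cdot)$ to get $\varepsilon(\hat{\bm{h}})\leq\sum_k p(k)\varepsilon(\hat{\bm{h}}_k)$, apply Theorem~\ref{theorem_1} per client in the representation space, and kill the labeling-function term via the covariate-shift assumption. The only (immaterial) difference is the order of the two steps — you invoke Theorem~\ref{theorem_1} before aggregating, the paper aggregates first — and your explicit flagging of the quantifier issue is a point the paper itself leaves implicit.
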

Theorem \ref{theorem_2} holds when the hypothesis $\hat{\bm{h}}$ and loss function $\varepsilon (\cdot)$ are both convex.
In general, for classification tasks in FL, we can adopt the last fully-connected layer followed by a softmax layer as the classifier and the cross-entropy loss as the loss function, which are both convex, such that Theorem \ref{theorem_2} is valid.
Compared with the error bounds in \cite{fedada} and \cite{fedgen} that require the impractical joint error of the optimal hypothesis w.r.t. source and target domains, the generalization error bound derived in Theorem \ref{theorem_2} characterizes the sufficient conditions to relieve the feature-skewed issue in FL. Specifically, the first term represents the expected source errors of clients, and the second term quantifies the marginal representation distribution distance between source and target domains.

According to Theorem \ref{theorem_2}, we can reduce the risks of the global dataset by minimizing the local source errors and marginal distribution distance between global and local datasets, thereby solving the feature-skewed issue.
To achieve this goal, we propose a novel FL algorithm that learns label-discriminative and client-invariant representations in the next section.
The client-invariant representation learning is effective since it achieves smaller divergence between the global and local datasets in the feature space (the second term in Theorem \ref{theorem_2}), which further reduces the discrepancy between global and local models and achieves lower local risks (the first term in Theorem \ref{theorem_2}) in local training. 
This will be verified by experiments in Section \ref{insight}.

\section{Proposed Method}
\label{method}
In this section, we propose a method called \textit{FedCiR} to achieve label-discriminative and client-invariant representation extraction and reduce the generalization error for FL.
Specifically, we propose two mutual information terms to promote the representations to be class-discriminative and client-invariant, respectively.
Moreover, two local regularizers are incorporated into FL to bound these two mutual information terms with an approximate global representation distribution.
To achieve the approximation, we further develop a data-free mechanism by maximizing the consensus of the uploaded classifiers, which protects the privacy as strict as FedAvg.

\subsection{Class-Discriminative Representation Learning for Federated Learning}
In the proposed representation learning framework, we aim to learn the representation $\bm{z}$ of a data sample $\bm{x}$ with a distribution $p_{\bm{w}_1}(\bm{z}|\bm{x})$ parameterized by an encoder network $\bm{w}_1$ (we omit $\bm{w}_1$ afterwards for notation simplicity). 
With the usage of batch normalization (BN) layers, the representation distribution $p(\bm{z}|\bm{x})$ is assumed to be a normal distribution, which can be computed through a probabilistic mapping: $p(\bm{z}|\bm{x})=\mathcal{N}(\bm{z};\bm{\mu}_{\bm{w}_1}(\bm{x}),\bm{\sigma}_{\bm{w}_1}(\bm{x}))$, where $\bm{\mu}_{\bm{w}_1}(\cdot)$ and $\bm{\sigma}_{\bm{w}_1} (\cdot)$ are the mean and standard deviation matrices output by the encoder network $\bm{w}_1$.
By using the reparameterization trick \cite{para_trick}, we obtain $p(\bm{z}|\bm{x})d\bm{z}=p(\bm{\epsilon})d \bm{\epsilon}$, where $\bm{z}=f(\bm{x},\bm{\epsilon})$ is a deterministic function of $\bm{x}$ and $\bm{\epsilon} \sim \mathcal{N}(0,\bm{I})$ is a Gaussian random variable.
As a result, the noise term $\bm{\epsilon}$ becomes independent of the parameters $\bm{w}_1$, which makes it convenient to compute gradients and update parameters.

During the FL training process, we sample the representation $\bm{z}$ for a given $\bm{x}$ from the distribution $p(\bm{z}|\bm{x})$.
After obtaining the representation $\bm{z}$, we train a classifier $\bm{w}_2$ that predicts the label $y$ with a predictive distribution $\hat{p}_{\bm{w}_2} (y|\bm{z})$ (we also omit the parameters $\bm{w}_2$ afterwards). For general classification tasks, the predictive distribution is the output of a softmax layer that follows a linear layer.
Therefore, the predictive distribution of $y$ for $\bm{x}$ is given by:
\begin{equation}
    \hat{p}(y|\bm{x}) = \mathbb{E}_{p(\bm{z}|\bm{x})} \hat{p} (y|\bm{z}).
\end{equation}
The loss function of sample $\bm{x}$ and label $y$ is set as the negative log predictive density, i.e., $\ell (\bm{w};\bm{x},y)=- \log \mathbb{E}_{p(\bm{z}|\bm{x})} \hat{p} (y|\bm{z})$, where $\bm{w}=\bm{w}_1 \circ \bm{w}_2$ is the global model.
With the local data distribution $p(\bm{x},y|k)$ and prediction distribution $\hat{p}({y}|k,\bm{z})$, the local objective function of client $k$ is given by:
\begin{equation}
\label{local_obj}
    \mathcal{L}_k = \mathbb{E}_{p(\bm{x},y|k)} [- \log \mathbb{E}_{p(\bm{z}|\bm{x})} \hat{p} (y|k,\bm{z})].
\end{equation}
Recall that the joint distribution of FL is $p(k,y,\bm{x})=p(k)p(\bm{x},y|k)$. Thus, the global objective function $\mathcal{L}_{fl}$ is given by:
\begin{align}
        \mathcal{L}_{fl} 
        = \sum_{k \in \mathcal{K}} p(k)\mathbb{E}_{p(\bm{x},y|k)} [- \log \mathbb{E}_{p(\bm{z}|\bm{x})} \hat{p} (y|k,\bm{z})].
\end{align}

The goal of representation learning in FL is to obtain informative representation $Z$ that are useful for classifying their corresponding label $Y$.
Note that $Z$ and $Y$ are global random variables defined over all the clients.
However, the objective function $\mathcal{L}_{fl}$ for representation learning only aims to improve the local partial knowledge for local models as in the traditional FL framework.
This is because each client $k$ has access to only a local representation $\bm{z}$ that is sampled from $p(\bm{z}|\bm{x})$ given $\bm{x}$, which leads to insufficient knowledge in local models.

To improve the label knowledge of representations, an intuitive method is to maximize the mutual information between representation $Z$ and label $Y$, which is defined by $I(Y;Z) = \int_Y \int_Z p(y,\bm{z}) \text{log} \frac{p(y,\bm{z})}{p(y) p(\bm{z})} d\bm{z} dy$.
Nevertheless, computing the mutual information term $I(Y; Z)$ during local training is intractable since the absence of the global representation distribution $p(\bm{z} | y)$ leads to the intractability of $p(y,\bm{z})=p(y)p(\bm{z}|y)$. 
To overcome this issue, we derive an upper bound on the negative mutual information term in the following proposition by introducing an approximate variational distribution $q(\bm{z}|y)$  parameterized by a generative neural network $\bm{w}_g$, which will be detailed in Section \ref{generator}. 

\begin{proposition}
\label{pro_3}
    Let $q(\bm{z}|y)$ be the approximation of the global representation distribution $p (\bm{z}|y)$. The negative mutual information $-I(Y;Z)$ is upper bounded as follows:
    \begin{align}
    -I(Y;Z) \leq \mathcal{L}_{reg},
    \end{align}
    where 
    \begin{align}
    \label{l_reg}
        \mathcal{L}_{reg} = \sum_{k\in \mathcal{K}} p(k)\mathbb{E}_{p(y)} \mathbb{E}_{q(\bm{z}|y)} [- \log \hat{p}(y|k,\bm{z})].
    \end{align}
\begin{proof}
\text{See Appendix \ref{pro_3_proof}.}
\end{proof}
\end{proposition}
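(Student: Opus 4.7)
The plan is to derive the bound through a chain of three inequalities that successively replace global, unavailable quantities by federated, implementable ones. First I would exploit that $Y$ is a discrete class label, so $H(Y)\geq 0$, giving
\[
-I(Y;Z) \;=\; H(Y|Z) - H(Y) \;\leq\; H(Y|Z) \;=\; \mathbb{E}_{p(y,z)}[-\log p(y|z)].
\]
This step already frees the analysis from the constant $H(Y)$ that would otherwise appear in a standard Barber--Agakov-type bound and is the reason the inequality can be stated as a clean upper bound on $-I(Y;Z)$ directly.

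Next, to eliminate the intractable global posterior $p(y|z)$, I would apply Gibbs' inequality with the ensemble of local classifiers as surrogate: for any distribution $\tilde p(y|z)$ one has $\mathbb{E}_{p(y|z)}[-\log p(y|z)] \leq \mathbb{E}_{p(y|z)}[-\log \tilde p(y|z)]$. The natural choice is $\tilde p(y|z)=\sum_{k\in\mathcal{K}} p(k)\hat p(y|k,z)$, which is the only predictor the server can actually form from the uploaded local heads. Applying Jensen's inequality to the concave $\log$ to push the client sum outside then yields
\[
H(Y|Z) \;\leq\; \sum_{k\in\mathcal{K}} p(k)\,\mathbb{E}_{p(y,z)}[-\log \hat p(y|k,z)].
\]

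The final step is to replace the global joint $p(y,z)=p(y)\,p(z|y)$ by its federated surrogate $p(y)\,q(z|y)$, where $q(z|y)$ is the generator-induced approximation of the global representation distribution that will be constructed in Section \ref{generator}. This substitution turns the right-hand side into exactly $\mathcal{L}_{reg}$ as defined in \eqref{l_reg}, completing the proof.

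The main obstacle is precisely this last substitution: swapping $p(z|y)$ for $q(z|y)$ is an approximation rather than a bona fide inequality in general, and a naive bookkeeping introduces an unsigned residual involving $\mathrm{KL}(p(z|y)\|q(z|y))$. I plan to make the step rigorous by stating, as a hypothesis of the proposition, that the generator $\bm{w}_g$ realises $q(z|y) = p(z|y)$ at the optimum of its training objective (so that $\mathbb{E}_{p(z|y)}[\cdot]$ is faithfully replaced by $\mathbb{E}_{q(z|y)}[\cdot]$). Flagging this explicitly in the appendix is what ties the information-theoretic derivation to the concrete algorithmic mechanism for $\bm{w}_g$ described later, and is the single point on which the entire bound pivots.
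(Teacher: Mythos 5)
Your proposal matches the paper's proof essentially step for step: drop $H(Y)$ (the paper treats it as an optimization-irrelevant constant, you justify the drop cleanly via $H(Y)\ge 0$), bound $H(Y|Z)$ by the cross-entropy of the local classifiers via Gibbs'/KL non-negativity, and finally swap $p(\bm{z}|y)$ for $q(\bm{z}|y)$ --- a step the paper likewise performs only as an ``$\approx$'', which is exactly the non-rigorous pivot you correctly flag. The only cosmetic difference is that you route through the ensemble predictor $\sum_k p(k)\hat{p}(y|k,\bm{z})$ plus Jensen, while the paper applies the variational bound per client and then averages over $p(k)$; the resulting bound is identical.
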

Note that the upper bound $\mathcal{L}_{reg}$ in Proposition \ref{pro_3} can be adopted as a regularizer in FL directly.
As such, we adopt $\mathcal{L}_{reg}$ as the proxy of the negative mutual information $-I(Y;Z)$ and focus on decreasing $\mathcal{L}_{reg}$ to minimize the negative mutual information $-I(Y;Z)$.
Specifically, in each communication round, each client downloads the generative neural network $\bm{w}_g$ from the server and uses it to output the global representations $\bm{z}$ given any class label $y$ and Gaussian noise $\bm{\epsilon}$. 
By including these global representations $\bm{z}$ in the local updating, clients can obtain more label information for representations.

\subsection{Client-Invariant Representation Learning for Federated Learning}
Under the feature-skewed scenarios in FL, training an encoder $\bm{w}_1$ that can extract client-invariant representations is non-trivial with distributed domains of data.
To address this problem, we introduce another regularization term to facilitate the extraction of client-invariant representation.

Intuitively, if the representation distribution of each class is the same among clients, i.e., $p(\bm{z}|k,y)=p (\bm{z} | k', y)$, $ \forall k, k' \in \mathcal{K}$, $k \neq k'$, the representation $\bm{z}$ is client-invariant.
We illustrate the sufficient conditions for the client-invariant representation extraction with the conditional mutual information term $I(\bm{z};k|y)$ in the following proposition.

\begin{proposition}
\label{remark_1}
    Given a label class $y$, a representation $\bm{z}$ is client-invariant if and only if $I(\bm{z};k|y)=0$, where $I(\bm{z};k|y)$ is the conditional mutual information between $\bm{z}$ and $k$.
\end{proposition}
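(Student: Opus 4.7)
The plan is to show equivalence via the standard characterization of conditional mutual information: $I(\bm{z};k|y)=0$ if and only if $\bm{z}$ and $k$ are conditionally independent given $y$, which in turn is equivalent to the client-invariance condition $p(\bm{z}|k,y)=p(\bm{z}|k',y)$ for all $k,k'\in\mathcal{K}$.

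First I would expand the conditional mutual information in its KL-divergence form,
\begin{equation*}
I(\bm{z};k|y) \;=\; \mathbb{E}_{p(y)}\!\left[\, D_{\mathrm{KL}}\!\bigl(p(\bm{z},k|y)\,\|\,p(\bm{z}|y)p(k|y)\bigr)\right],
\end{equation*}
and invoke Gibbs' inequality to conclude that $I(\bm{z};k|y)=0$ holds if and only if $p(\bm{z},k|y)=p(\bm{z}|y)p(k|y)$ for $(p(y)\otimes p(k|y))$-almost every $(y,k)$. Factoring the left-hand side as $p(\bm{z},k|y)=p(\bm{z}|k,y)\,p(k|y)$, this condition is equivalent to $p(\bm{z}|k,y)=p(\bm{z}|y)$ for every $k$ with $p(k|y)>0$.

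For the ``if'' direction, assume $I(\bm{z};k|y)=0$. Then for any two clients $k,k'\in\mathcal{K}$ we immediately have $p(\bm{z}|k,y)=p(\bm{z}|y)=p(\bm{z}|k',y)$, which is exactly the client-invariance condition. For the ``only if'' direction, suppose $p(\bm{z}|k,y)=p(\bm{z}|k',y)$ for every $k,k'\in\mathcal{K}$ and denote this common conditional by $q(\bm{z}|y)$. Using the law of total probability,
\begin{equation*}
p(\bm{z}|y) \;=\; \sum_{k\in\mathcal{K}} p(\bm{z}|k,y)\,p(k|y) \;=\; q(\bm{z}|y)\sum_{k\in\mathcal{K}} p(k|y) \;=\; q(\bm{z}|y),
\end{equation*}
so $p(\bm{z}|k,y)=p(\bm{z}|y)$ for all $k$, and therefore $I(\bm{z};k|y)=0$ by the characterization above.

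There is essentially no technical obstacle here; the statement is a direct consequence of the non-negativity of KL divergence together with the definition of client-invariance. The only point requiring a sentence of care is verifying that pointwise equality of the per-client conditionals forces them to agree with the marginal $p(\bm{z}|y)$, which is handled by the mixture decomposition above. I would also briefly remark that the argument goes through whether $\bm{z}$ is continuous or discrete, and whether the client index $k$ is treated as a discrete random variable over $\mathcal{K}$ as defined in Section \ref{Preliminary}.
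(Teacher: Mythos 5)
Your proof is correct and follows essentially the same route as the paper's: the forward direction uses the standard fact that vanishing conditional mutual information forces $p(\bm{z}|k,y)=p(\bm{z}|y)$, and the converse uses the same mixture decomposition $p(\bm{z}|y)=\sum_{k}p(\bm{z}|k,y)p(k|y)$ to show the common conditional equals the marginal. Your write-up is somewhat more careful than the paper's (you justify the equivalence with Gibbs' inequality and note the almost-everywhere qualification), but the substance is identical.
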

\begin{proof}
    \text{See Appendix \ref{re_1_proof}.}
\end{proof}
According to Proposition \ref{remark_1}, the representation $Z$ is client-invariant when $I(Z;K|Y)=0$ given the label $Y$. 
Therefore, to achieve client-invariant representation learning, we need to reduce the conditional mutual information $I(Z;K|Y)= \sum_{k \in \mathcal{K}}\int_Y \int_Z p(k,y) p(\bm{z}|k,y) \log \frac{p(\bm{z}|k,y)}{p(\bm{z}|y)} d\bm{z} dy$ in FL.
However, this term is hard to compute in FL since it requires the conditional distribution $p(\bm{z}|k,y)$ and the global representation distribution $p(\bm{z}|y)$.
To tackle this issue, we derive an upper bound for the conditional mutual information $I(Z;K|Y)$ with the approximate global representation distribution $q(\bm{z}|y)$ in the following proposition.
\begin{proposition}
\label{proposition_1}
\label{client_inva}
    Let $q(\bm{z}|y)$ be the approximation of the global representation distribution $p (\bm{z}|y)$. The conditional mutual information $I(Z;K|Y)$ can be bounded as follows:
    \begin{equation}
    \begin{split}
        &I (Z; K|Y) \leq \mathcal{L}_{align},
    \end{split}
    \end{equation}
    where 
    \begin{align}
    \label{l_align}
        \mathcal{L}_{align} = \sum_{k \in \mathcal{K}} p(k)  \mathbb{E}_{p(\bm{x},y|k)} D_{\text{KL}} \big(p(\bm{z}|k,\bm{x}) \|  q(\bm{z}|y) \big),
    \end{align}
    and $D_{\text{KL}} \big(p(\bm{z}|k,\bm{x}) \|  q(\bm{z}|y) \big) = p(\bm{z}|k,\bm{x}) \log 
 \frac{p(\bm{z}|k,\bm{x})}{q(\bm{z}|y)}$ is the Kullback–Leible (KL) divergence between the local and approximate global representation distributions.
\end{proposition}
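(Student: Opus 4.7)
The plan is to prove the bound in three stages: first rewrite $I(Z;K|Y)$ as an averaged KL divergence, then swap the true marginal $p(\bm{z}|y)$ for the variational approximation $q(\bm{z}|y)$ at the cost of a nonnegative slack, and finally move from the class-conditional $p(\bm{z}|k,y)$ (which depends on the unknown local label-conditional data distribution) to the sample-conditional $p(\bm{z}|k,\bm{x})$ that the encoder actually produces. The resulting expression should match $\mathcal{L}_{align}$ in \eqref{l_align}.

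First I would expand
\begin{align}
I(Z;K|Y) = \mathbb{E}_{p(k,y)} \, D_{\text{KL}}\!\bigl(p(\bm{z}|k,y)\,\big\|\,p(\bm{z}|y)\bigr),
\end{align}
and then compare it with $\mathbb{E}_{p(k,y)}\,D_{\text{KL}}(p(\bm{z}|k,y)\|q(\bm{z}|y))$. Subtracting the two quantities and using $\sum_k p(k|y)p(\bm{z}|k,y)=p(\bm{z}|y)$ to marginalize out $k$, the cross term collapses to $\mathbb{E}_{p(y)} D_{\text{KL}}(p(\bm{z}|y)\|q(\bm{z}|y))$, which is nonnegative. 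This yields the standard variational bound
\begin{align}
I(Z;K|Y) \leq \mathbb{E}_{p(k,y)}\, D_{\text{KL}}\!\bigl(p(\bm{z}|k,y)\,\big\|\,q(\bm{z}|y)\bigr),
\end{align}
and makes the approximation error explicit: the bound is tight precisely when $q(\bm{z}|y)$ matches the true global representation marginal, which is the motivation for the data-free generator introduced in Section \ref{generator}.

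Next I would express $p(\bm{z}|k,y)=\int p(\bm{z}|k,\bm{x})\,p(\bm{x}|k,y)\,d\bm{x}$, i.e., as a mixture over samples drawn from the client-and-class-conditional input distribution. Since $D_{\text{KL}}(\cdot\|q(\bm{z}|y))$ is convex in its first argument for fixed second argument, Jensen's inequality gives
\begin{align}
D_{\text{KL}}\!\bigl(p(\bm{z}|k,y)\,\big\|\,q(\bm{z}|y)\bigr) \leq \mathbb{E}_{p(\bm{x}|k,y)}\, D_{\text{KL}}\!\bigl(p(\bm{z}|k,\bm{x})\,\big\|\,q(\bm{z}|y)\bigr).
\end{align}
Inserting this into the previous display and recognizing $p(k,y)\,p(\bm{x}|k,y)=p(k)\,p(\bm{x},y|k)$, the outer expectation factorizes into $\sum_{k\in\mathcal{K}} p(k)\,\mathbb{E}_{p(\bm{x},y|k)}[\cdot]$, which is exactly $\mathcal{L}_{align}$ as defined in \eqref{l_align}.

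The main obstacle I anticipate is justifying the passage from $p(\bm{z}|k,y)$ to $p(\bm{z}|k,\bm{x})$: one must argue that the encoder's stochastic mapping depends on the input only through $\bm{x}$, so that the mixture representation is legitimate, and then invoke convexity of KL in the correct argument. The first step is also somewhat delicate because the variational slack $\mathbb{E}_{p(y)}D_{\text{KL}}(p(\bm{z}|y)\|q(\bm{z}|y))$ is not vanishing in practice, so the quality of the bound is inherited from the accuracy of the server-side generator; this is worth flagging explicitly but does not affect the validity of the inequality itself.
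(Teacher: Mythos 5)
Your proof is correct and takes essentially the same route as the paper's: expand $I(Z;K|Y)$ as an averaged KL divergence against $p(\bm{z}|y)$, swap in $q(\bm{z}|y)$ at the cost of the nonnegative slack $\mathbb{E}_{p(y)}D_{\text{KL}}\bigl(p(\bm{z}|y)\,\|\,q(\bm{z}|y)\bigr)$, and then use Jensen's inequality to pass from the class-conditional $p(\bm{z}|k,y)$ to the sample-conditional $p(\bm{z}|k,\bm{x})$. The only cosmetic difference is that the paper invokes convexity of $m\log m$ on the entropy term alone (the cross term being linear), whereas you invoke convexity of the KL divergence in its first argument — these are equivalent here.
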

\begin{proof}
 \text{See Appendix \ref{pro_4_proof}.}
\end{proof}
Similar to $\mathcal{L}_{reg}$, the upper bound $\mathcal{L}_{align}$ in Proposition \ref{client_inva} can be used as a regularizer in FL for client-invariant feature extraction.
By using $\mathcal{L}_{align}$ as the proxy of $I(Z;K|Y)$, we can reduce $I(Z;K|Y)$ by minimizing $\mathcal{L}_{align}$, thereby achieving client-invariant feature extraction.
To be specific, each client downloads the approximate global representation distribution $q(\bm{z}|y)$ from the server, and aligns the local representation distribution $p(\bm{z}|k,\bm{x})$ with $q(\bm{z}|y)$ for the same class $y$ using the KL divergence in local training.
The representation distribution alignment helps to reduce the divergence between global and local representation distributions, thus encouraging clients to share the same representation distribution, i.e., $p(\bm{z}|k,\bm{x},y) \approx p(\bm{z}|k^\prime,\bm{x},y)$, $k,k^\prime \in \mathcal{K}$.


\begin{figure*}[pt]
    \centering \includegraphics[width=\textwidth]{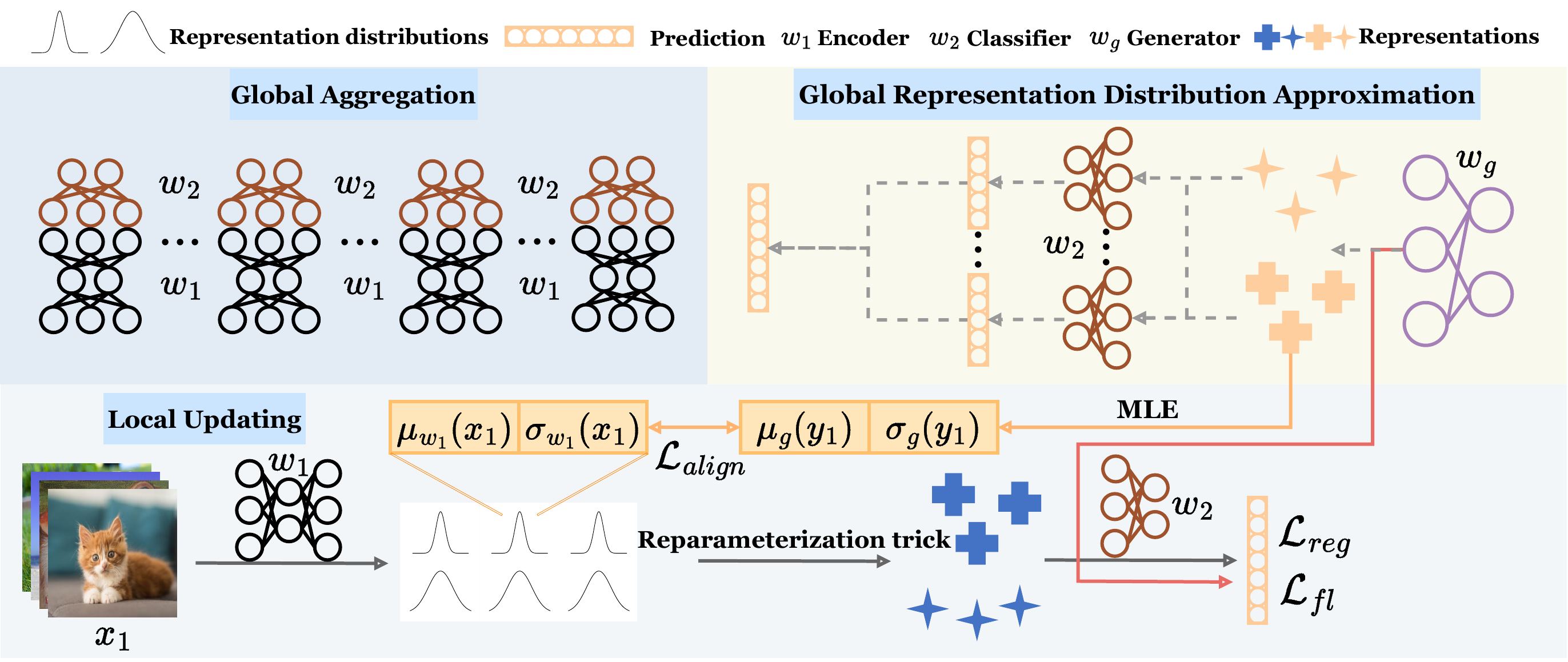}
    \caption{Illustration of the proposed method. Client side: Clients download the generator $\bm{w}_g$ and the approximate global representation distribution set $(\{\bm{\mu}_g(y),\bm{\sigma}_g(y)\}_{y=0}^{C-1})$ for local updating. Server side: the server uses the uploaded local classifiers to optimize the generator $\bm{w}_g$ for representation distribution approximation and conducts global aggregation.}
    \label{fig:framework}
\end{figure*}

\subsection{Global Representation Distribution Approximation}
\label{generator}
We note that the derived upper bounds of two mutual information terms, i.e., $\mathcal{L}_{reg}$ and $\mathcal{L}_{align}$, rely on the approximation of the global representation distribution $p(\bm{z} | y)$, i.e., $q(\bm{z} | y)$.
Nonetheless, the distributed local samples in clients make it difficult to approximate the global representation distribution $p(\bm{z} | y)$ in FL.
To tackle this issue, we propose to approximate the global representation distribution with a generative network for clients using a data-free mechanism.
In particular, we use a generator $G$ parameterized by $\bm{w}_g$ to generate the representation distribution $q_{\bm{w}_g}(\bm{z}|y)$ (we also omit $\bm{w}_g$ for notation simplicity), which is then optimized at the server in each communication round.
Since the global model obtained by aggregating local models loses the local knowledge, we exploit the ensemble predictions of the local models to optimize the generator $G$ by minimizing the following objective function:
\begin{align}
\label{g_train}
    \mathcal{L}_g = \mathbb{E}_{ p(y)} \mathbb{E}_{ q(\bm{z}|y)}  \left[- \log \sum_{k\in\mathcal{K}} p(k) \hat{p} ({y} |k, \bm{z}) \right],
\end{align}
where $q(\bm{z}|y) \equiv G(y,\bm{\epsilon} | \bm{\epsilon} \sim \mathcal{N}(0,\bm{I}))$ is the generated representation distribution, and $\hat{p} ({y} |k, \bm{z})$ is the predicted logit output by the local model from client $k$.
FedGen \cite{fedgen} requires additional training data distribution for representation distribution approximation, which is vulnerable to gradient inversion attacks \cite{gradient_inversion} and brings extensive privacy risks.
Instead, our proposed data-free mechanism only leverages the local models to approximate the global representation distribution and does not bring extra privacy exposure. 
For any given class label $y$, by maximizing the agreement of local models, the representation $\bm{z}$ generated by the generator network achieves the maximum consensus of them, thereby effectively approximating the global distribution representation.

\subsection{Training Details and Overall Framework of FedCiR}
\label{framework}

By integrating $\mathcal{L}_{fl}$ and the proposed two regularization terms $\mathcal{L}_{req}$ and $\mathcal{L}_{align}$ obtained by the Proposition \ref{pro_3} and Proposition \ref{client_inva}, respectively, the global objective function of the proposed method FedCiR is given by: 
\begin{align}
    \mathcal{L}_{FedCiR} = \mathcal{L}_{fl} + \lambda_{reg} \mathcal{L}_{reg} + \lambda_{align} \mathcal{L}_{align},
\end{align}
where $\lambda_{reg}$ and $\lambda_{reg}$ are the hyperparameters for the regularization terms.
With these two regularizers, FedCiR can achieve label-discriminative and client-invariant feature extraction and solve the feature shift issue in FL.
In the following, we elaborate on the training details of FedCiR.

\textbf{KL divergence between local and approximate global representation distributions} $D_{\text{KL}} \big(p(\bm{z}|k,\bm{x}) \|  q(\bm{z}|y) \big)$:
To compute $D_{\text{KL}} \big(p(\bm{z}|k,\bm{x}) \|  q(\bm{z}|y) \big)$ in \eqref{l_align}, following \cite{para_trick}, we assume that the global representation distribution $q(\bm{z}|y)$ is a Gaussian distribution, which can be parameterized by the mean $\bm{\mu}_g (y)$ and standard deviation $\bm{\sigma}_g (y)$ for each class $y$.
Note that the generator $G$ outputs the representations samples instead of the distribution $q(\bm{z}|y)$.
Hence, we estimate the mean $\bm{\mu}_g (y)$ and standard deviation $\bm{\sigma}_g (y)$ for each class $y$ at the server using the maximum likelihood estimation (MLE) method.
As such, the KL divergence \cite{kl_divergence} between $p(\bm{z}|k,\bm{x})$ and $q(\bm{z}|y)$ in \eqref{l_align} can be computed as follows:
\begin{align}
         &D_{\text{KL}} (p(\bm{z}|k, \bm{x}) \| q(\bm{z}|y))
         = \log \bm{\sigma}_g (y) - \log  \bm{\sigma}_{\bm{w}_1} (y) \nonumber \\ 
        & \quad \quad \quad \quad \quad+ \frac{\bm{\sigma}_{\bm{w}_1}^2 (\bm{x}) + (\bm{\mu}_{\bm{w}_1} (\bm{x})-\bm{\mu}_g (y))^2}{2 \bm{\sigma}_g^2 (y)} - \frac{1}{2}.
        \label{kl}
\end{align}


\begin{algorithm}
    \renewcommand{\algorithmicrequire}{\textbf{Input:}}
    \renewcommand{\algorithmicensure}{\textbf{Output:}}
    \caption{FedCiR}
    \label{algo}
    \begin{algorithmic}[1]
    \REQUIRE communication round $\mathcal{T}$; local steps $E$; training steps for generator $E_g$; class number $C$; client set $\mathcal{K}$; local distributions $\{\mathcal{D}_k\}_{k \in \mathcal{K}}$; global model $\bm{w}$; generator $\bm{w}_g$; learning rate $\gamma$, $\gamma_g$; .
        \STATE \textbf{Initialization}: model $\bm{w} = \bm{w}_1 \circ \bm{w}_2$ and generator $\bm{w}_g$
        \FOR{ each communication round $t=1,\dots, \mathcal{T}$}
        \STATE \textbf{Server Executes:}
        \STATE Sample active client set $\mathcal{K}_t$ from $\mathcal{K}$
        \FOR{$k \in \mathcal{K}_t$ \textbf{in parallel}}
        \STATE $\bm{w}_k^t \leftarrow$ \text{ClientUpdate} ($\bm{w}_t, \bm{w}_g, \{\bm{\mu}_g(y),\bm{\sigma}_g(y)\}_{y=0}^{C-1}$)
        \ENDFOR
        \FOR{each step $e=1, \dots, E_g$}
        \STATE $\bm{w}_{g,e}=\bm{w}_{g,e-1} - \gamma_g \mathcal{L}_g (\bm{w}_{g,e-1}) $ \quad $\triangleright$ Update generator $\bm{w}_g$ by (\ref{g_train})
        \ENDFOR
        \STATE Compute the global representation distribution sets $\{\bm{\mu}_g(y),\bm{\sigma}_g(y)\}_{y=0}^{C-1}$ using MLE
        \STATE $\bm{w}_{t+1} = |\frac{1}{\mathcal{K}_t}| \sum_{k \in \mathcal{S}_t}  \bm{w}_k^t$ \quad $\triangleright$ Global aggregation
        \ENDFOR
        \end{algorithmic}
        \textbf{def} \text{ClientUpdate($\bm{w}_t$, $\bm{w}_g$, $\{ \bm{\mu}_g(y),\bm{\sigma}_g(y)\}_{y=0}^{C-1}$):} 
        
        \begin{algorithmic}[1]
        \FOR{ each step $e=1,\dots,E$}
        \STATE $\bm{w}_{k}^{t,e}=\bm{w}_{k}^{t,e-1} - \gamma \nabla \mathcal{L}_k(\bm{w}_{k}^{t,e-1})$ \quad $\triangleright$ Update local models by (\ref{local_update}) 
        \ENDFOR
        \RETURN $\bm{w}_{k}^{t,E}$
        \end{algorithmic}
\end{algorithm}

The training process of the FedCiR framework, which is illustrated in Figure \ref{fig:framework} and summarized in Algorithm \ref{algo}, involves iterative steps where clients carry out local training, and the server performs global aggregation and global representation distribution approximation. The details of these two steps are provided as follows.

\textbf{Local training:}
In each communication round $t= 1, \cdots, \mathcal{T}$, a subset of clients $\mathcal{K}_t$ from $\mathcal{K}$ is active, and they download the global model and global representation distribution set $\{\bm{\mu}_g(y),\bm{\sigma}_g(y)\}_{y=0}^{C-1}$ from the server.
At each step $e \in [E]$ of the $t$-th local training, the active client $k$ samples a mini-batch of data $\xi_k^{t,e} = \{ \bm{x}_i, y_i \}_{i=1}^{B}$ with $B$ samples and input them into the encoder $\bm{w}_1$ to obtain the representation distribution $p(\bm{z}|\bm{x}_i)$ parameterized by $\bm{\mu}_{\bm{w}_1}(\bm{x}_i)$ and $\bm{\sigma}_{\bm{w}_1}(\bm{x}_i)$ for each data sample $\bm{x}_i$.
These representation distributions are used to compute the KL divergence term in \eqref{kl}.
Besides, we sample a representation $\bm{z}$ from the distribution $p(\bm{z}|k,\bm{x}_i)$, which is then input into the classifier to compute the loss term $\mathcal{L}_{fl}$.
To compute $\mathcal{L}_{reg}$ in \eqref{l_reg}, we sample a mini-batch size $B$ of Gaussian noise vectors $ \{\bm{\epsilon}_i | \bm{\epsilon}_i \sim \mathcal{N}(0,\bm{I})\}_{i=1}^B$ and label instances $\{ y_i\}_{i=1}^B$ over a uniform distribution.
The samples are input into the generator for generating the global representations, which are then input into the classifier for regularizing the local classifier.
Following the classical FedAvg method described in Section \ref{Preliminary}, the local objective function with the mini-batch of samples $\xi_k^{t,e}$ for client $k$ is given by:
\begin{align}
\label{local_update}
    \mathcal{L}_{k} (\bm{w}) =& \mathbb{E}_{(\bm{x}_i, y_i) \in \xi_k^{t,e} } [- \log \mathbb{E}_{p(\bm{z}|{\bm{x}_i})} \hat{p} (y|k,\bm{z}) \nonumber \\ 
    &+ \lambda_{reg} \mathbb{E}_{p(y)} \mathbb{E}_{q(\bm{z}|y)} [- \log \hat{p}(y|k,\bm{z})] \nonumber \\
    &+ \lambda_{align}  D_{\text{KL}} (p(\bm{z}|k,\bm{x}_i) \| q(\bm{z}|y_i))].
\end{align}

\textbf{Gloabl aggregation and global representation distribution approximation:}
After local training, the clients share the local models with the server as in FedAvg.
Besides the global model aggregation, the server optimizes the generator $\bm{w}_g$ with the help of local classifiers for $E_g$ steps.
Specifically, at each step $e\in [E_g]$ of the generator updating, the server samples a mini-batch size $B$ of Gaussian noise vectors $ \{\bm{\epsilon}_i | \bm{\epsilon}_i \sim \mathcal{N}(0,\bm{I})\}_{i=1}^B$ and label instances $\{y_i\}_{i=1}^B$ over a uniform distribution.
The server obtains the representations by inputting the Gaussian noise vectors and label instances into the generator $\bm{w}_g$.
To encourage the generator to maximize the agreement of all the clients, the server inputs the representations into all the uploaded local classifiers to compute the ensemble predictions.
Then, the generator is optimized by minimizing the cross-entropy loss between the ensemble predictions and the corresponding labels.
To relieve the computation burden for clients, the server updates the global representation distribution set $\{\bm{\mu}_g(y),\bm{\sigma}_g(y)\}_{y=0}^{C-1}$ for each class via MLE, which will be downloaded by the clients along with the global model in the next round.

\begin{table*}
\centering
\begin{tabular}{c||ccccc}
\hline
\rowcolor[gray]{0.8} \multicolumn{6}{c}{\textbf{Top-1 Test Accuracy}} \\
\hline \hline
\textbf{Accuracy (\%)}
&\multicolumn{1}{c}{\textbf{DomainNet}}&\multicolumn{1}{c}{\textbf{PACS}}&\multicolumn{1}{c}{\textbf{Office-Caltech-10}} &\multicolumn{1}{c}{\textbf{Camelyon17}} &\multicolumn{1}{c}{\textbf{Iwildcam}}\\ \hline \hline
{Centralized training} & $78.07 \pm 0.65$ & $81.34 \pm 0.87$ &
$81.49 \pm 1.10$ & $94.94 \pm 0.12$ & $87.59 \pm 0.51$ \\
FedAvg \cite{fl} & $60.03 \pm 1.43$ & $52.61 \pm 1.82$ & $62.73 \pm 0.32$ & $79.88 \pm 0.67$ & $85.26 \pm 0.62$  \\
FedProx \cite{fedprox} & $59.96 \pm 0.82$ & $55.14 \pm 1.42$ & $64.37 \pm 1.46$ & $79.95 \pm 0.82$  & $84.75 \pm 0.83$ \\
FedProc \cite{fedproc} & $61.61 \pm 0.87$ & $55.16 \pm 1.57$ & $63.58 \pm 1.78$ & $80.02 \pm 0.80$ &  $85.19 \pm 0.58$ \\
FedGen \cite{fedgen} & $63.74 \pm 1.12$ & $56.01 \pm 1.41$ & $62.80 \pm 1.84$ & $79.52 \pm 0.81$ & $84.81 \pm 0.89$ \\
FedSR \cite{fedsr} & $63.02 \pm 1.52$ & $55.18 \pm 2.61$ & $62.95 \pm 2.13$ & $80.21 \pm 0.62$ & $85.27 \pm 0.92$ \\
FedReg (Ours) & \underline{$ {64.78} \pm 1.34$} & $57.59 \pm 1.80$ & $63.45 \pm 1.32$ & $80.33 \pm 0.67$ & \underline{$86.14 \pm 0.52$} \\
FedAlign (Ours) & $63.95 \pm 0.81$ & \underline{$ 57.83 \pm 1.52$} & \underline{ $65.81 \pm 1.42$} & \underline{$80.38 \pm 0.74$} & $85.60 \pm 0.70$ \\
FedCiR (Ours) & $\bm{65.18} \pm \bm{1.12}$ & $\bm{58.81} \pm \bm{1.48}$ & $\bm{66.14} \pm \bm{1.45}$ & $ \bm{80.45} \pm \bm{0.71}$ & $\bm{86.23} \pm \bm{0.47}$ \\ \hline
\end{tabular}
\caption{Test accuracy (\%) of different algorithms on various datasets. The results in \textbf{bold}
indicate the best performance, and the second best results are \underline{underlined}.}
\label{main_results}
\end{table*}

\section{Experiments}
\label{experiments}
In this section, we evaluate the effectiveness of the proposed FedCiR framework in solving the feature shift problem in FL.
We first summarize the experiment setup in Section \ref{setup}, and compare FedCiR with several state-of-the-art (SOTA) FL algorithms on benchmark and real-world datasets in Section \ref{performance}. 
In Section \ref{ablation study}, ablation studies are conducted to demonstrate the sensitivity of FedCiR on hyperparameters.
To illustrate the effectiveness of FedCiR in client-invariant representation training as discussed in Theorem \ref{theorem_2}, we further compare FedCiR with FedAvg in terms of local risks and $\mathcal{H}$-divergence in Section \ref{insight}.
An overhead analysis is conducted in Section \ref{overhead} to demonstrate that FedCiR is energy-friendly in terms of computation, communication, and GPU memory.

\subsection{Experiment Setup}
\label{setup}
\textbf{Baselines:}
Apart from FedAvg \cite{fl}, we compare FedCiR against several SOTA FL algorithms in solving the non-IID issue, including FedProx \cite{fedprox}, FedProc \cite{fedproc}, FedGen \cite{fedgen}, and FedSR \cite{fedsr}.
To gain more insights from FedCiR, we develop two more
baselines: FedReg and FedAlign. Specifically, FedReg only improves the global knowledge for local models with the regularizer $\mathcal{L}_{reg}$, while FedAlign only utilizes the regularizer $\mathcal{L}_{align}$ to force the representations to be client-invariant.

\begin{figure}
    \centering
    \includegraphics[width=0.4\textwidth]{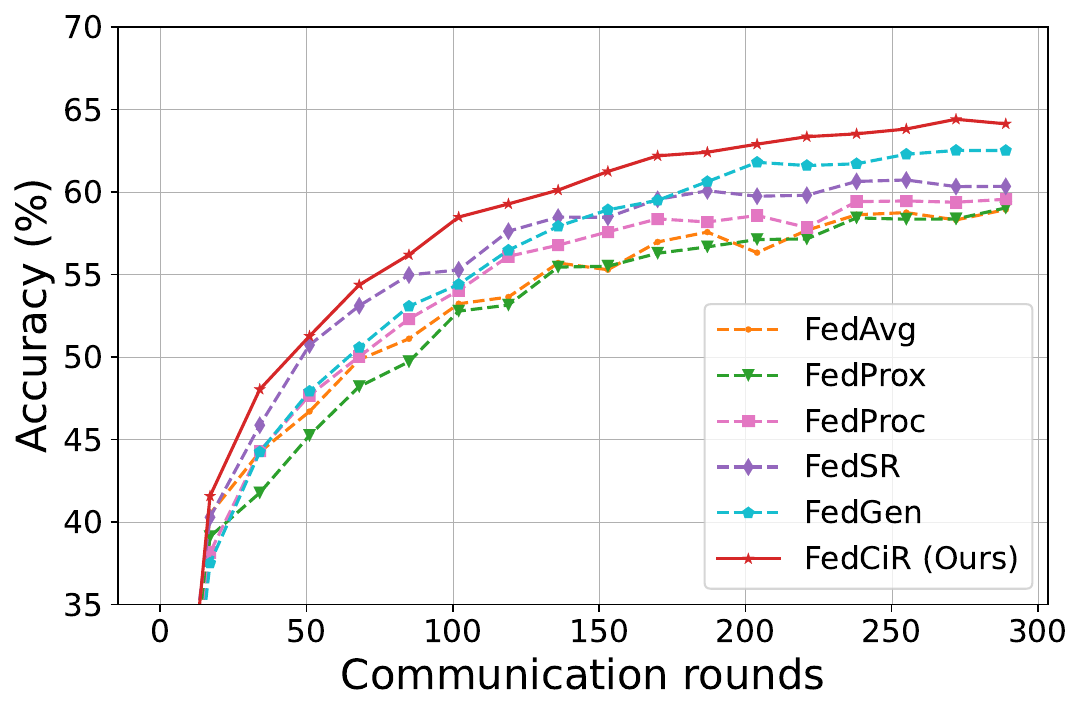}
    \caption{Convergence performance of different algorithms on the DomainNet dataset.}
    \label{Fig.conver_domainnet}
\end{figure}

\begin{figure}
    \centering
    \includegraphics[width=0.4\textwidth]{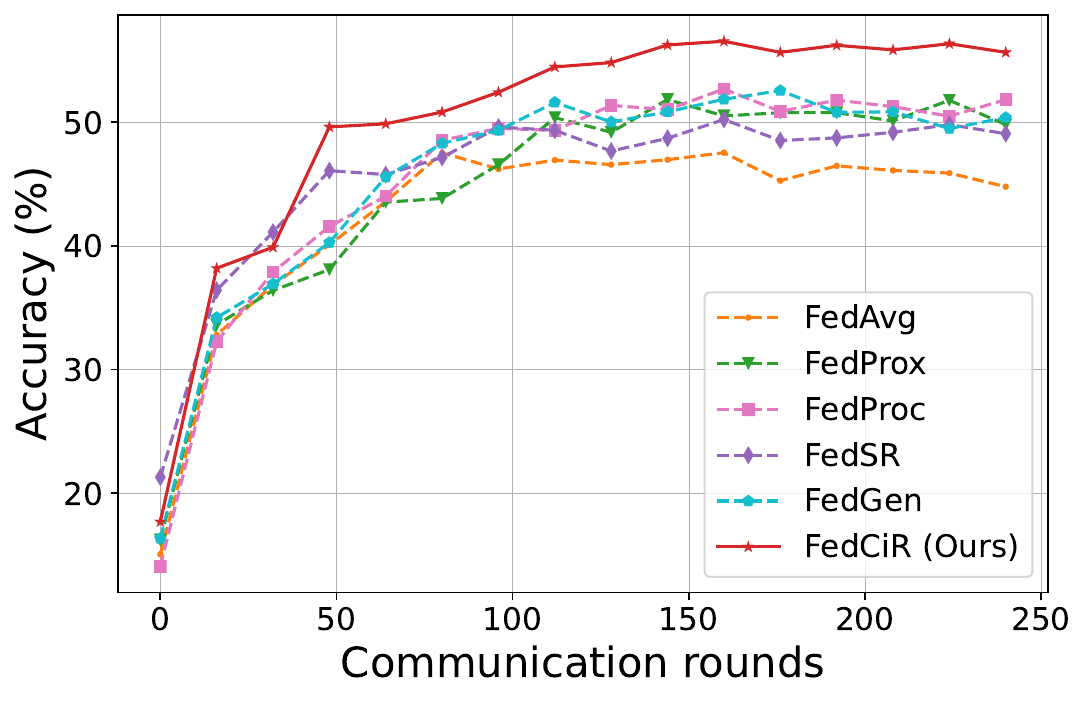}
    \caption{Convergence performance of different algorithms on the PACS dataset.}
    \label{Fig.conver_pacs}
\end{figure}

\textbf{Datasets:}
We consider both benchmark and real-world datasets, including DomainNet \cite{domainnet}, PACS \cite{PACS}, Office-Caltech-10 \cite{office_cal}, Camelyon17 \cite{camelyon17}, and Iwildcam \cite{iwildcam}, which contain multiple domains of data, e.g., photos, art paintings, cartoon images, and sketch images on PACS.
For FL, We consider the feature shift scenarios where each client has only one domain of data that are resized to $128 \times 128$ pixels.
The Camelyon17 dataset is a binary classification task for detecting tumors in varying hospitals and the Iwildcam dataset is a multi-class classification task for wildlife animal images captured by cameras at different locations. 
For the Iwildcam dataset, the data among cameras (clients) are both label-skewed and feature-skewed, and most of them are invalid background data and
useless for training.
Therefore, we only use ten classes of samples in ten of the cameras that have the largest data volume.

Moreover, apart from the feature-skewed data distribution, we also evaluate the effectiveness of FedCiR on label-skewed data distribution, where the data in each domain are distributed to multiple clients (number of clients / number of domains) according to the Latent Dirichlet Sampling with varying Dirichlet parameters $\beta$.

\textbf{Model structures:}
We use ResNet18 \cite{resnet18} as the backbone for all the datasets. 
For FedCiR and FedGen, the generator is composed of two fully connected layers with ReLU and BatchNorm layers between them.

\textbf{Hyperparameters:}
In the main experiments, we set the local updating steps $E=20$ and communication rounds $T=300$.
During local training, we use SGD with batch size 32, learning rate $0.01$, momentum 0.9, and weight decay $5\times 10^{-4}$, as the local optimizer.
For FedCiR and FedGen, the server optimizes the generator for 5 steps $(E_g=5)$ in each communication round.
More details of hyperparameters for FedCiR and baselines are available in Appendix \ref{hyper}.

\begin{figure}
    \centering
    \includegraphics[width=0.4\textwidth]{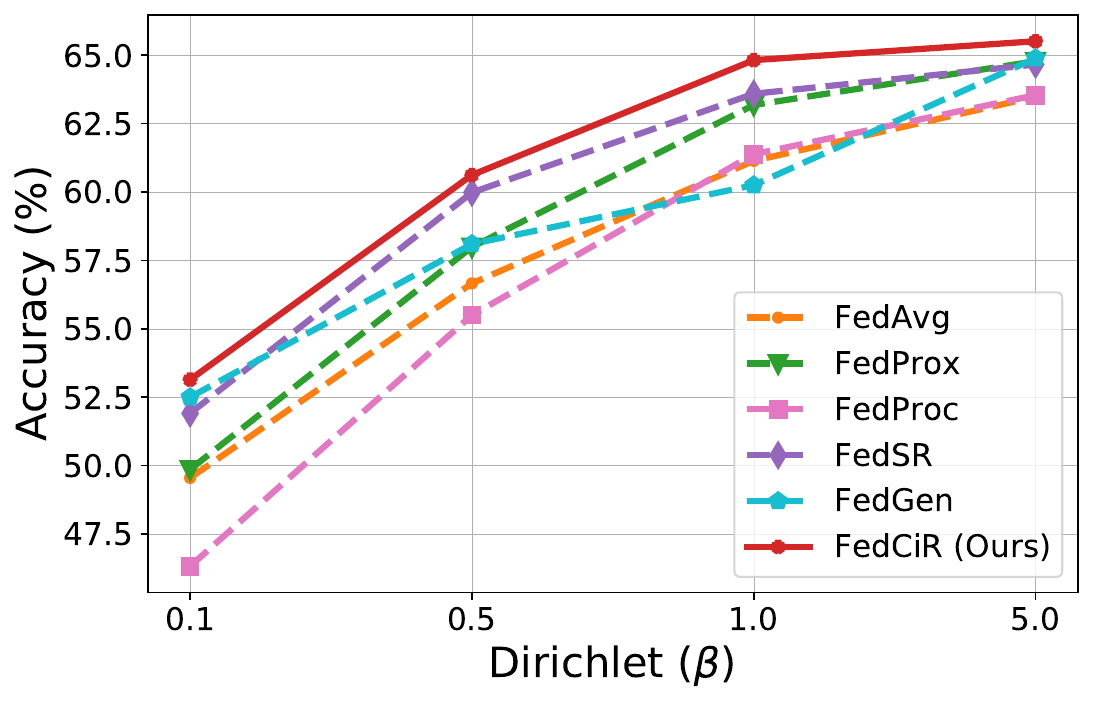}
    \caption{Test accuracy ($\%$) of different algorithms under various non-IID settings with 50 clients on DomainNet.}
    \label{Fig.hete_domainnet}
\end{figure}

\begin{figure}
    \centering
    \includegraphics[width=0.4\textwidth]{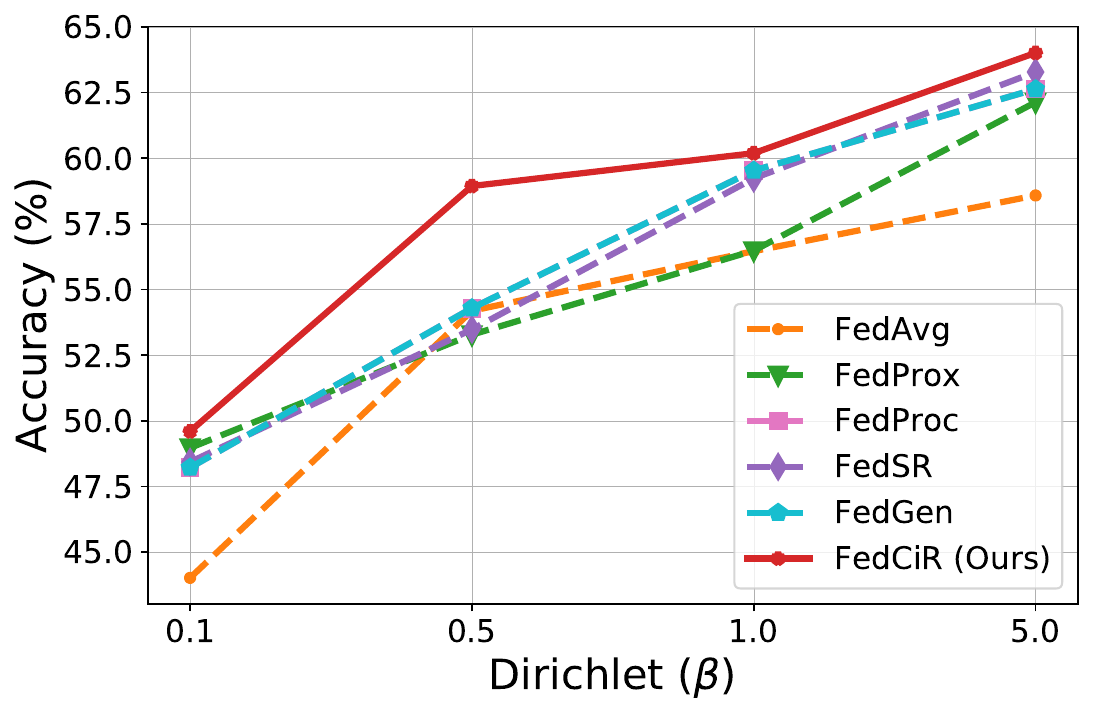}
    \caption{Test accuracy ($\%$) of different algorithms under various non-IID settings with 50 clients on PACS.}
    \label{Fig.hete_pacs}
\end{figure}

\subsection{Performance Evaluation}
\label{performance}
\textbf{Test accuracy:}
Table \ref{main_results} presents the test accuracy of FedCiR and the baselines, where the first row shows the performance of centralized training.
As shown in Table \ref{main_results}, FedCiR outperforms the baselines on all the datasets, surpassing FedGen by $1.24\%$ and $1.8\%$ on DomainNet and PACS, respectively.
FedGen employs a data-free knowledge distillation mechanism to regularize the local models, resulting in a better performance compared with FedAvg and FedProx.
FedProc and FedSR rely on the feature alignment to minimize the local model divergence. However, the experiment results reveal that they exhibit instability with different random seeds on some datasets, e.g., DomainNet and PACS.
This is because the average prototypes and representation networks in the server cannot accurately identify the precise features for alignment in the feature space as compared with our proposed data-free mechanism for representation distribution approximation.
In addition, FedCiR is consistently superior to the baselines on the real-world datasets: Camelyon17 and Iwildcam, which demonstrates its effectiveness in practical applications.

\textbf{Convergence rate:}
Fig. \ref{Fig.conver_domainnet} and Fig. \ref{Fig.conver_pacs} illustrate the convergence rate of FedCiR and baselines on DomainNet and PACS, respectively.
FedCiR achieves a significantly faster convergence rate compared with the baselines.
While FedGen may enjoy a higher learning efficiency in some settings by relying on effective proxy data, our proposed approach can explicitly align the representation distributions of all the clients and extract client-invariant representations, thereby achieving the fastest convergence rate.


\begin{figure}
    \centering
    \includegraphics[width=0.4\textwidth]{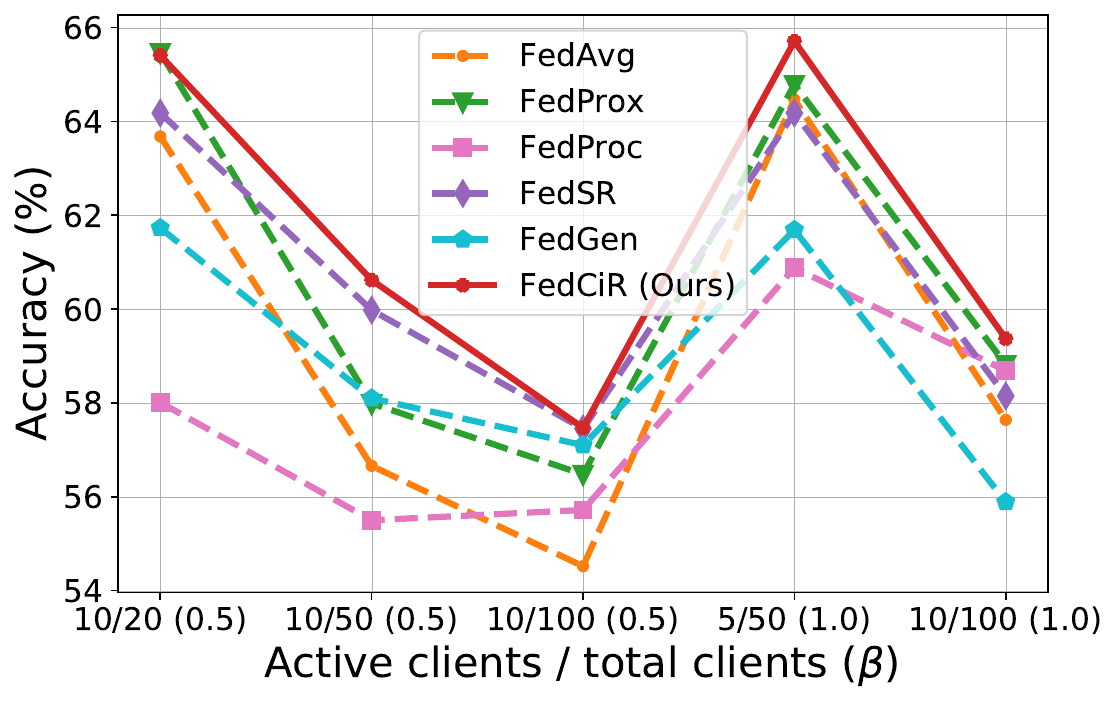}
    \caption{Test accuracy ($\%$) of different methods with varying client number and active client ratio on DomainNet. The values of Dirichlet parameter $\beta$ are given in parentheses.}
    \label{Fig.client_sel_domainnet}
\end{figure}

\begin{figure}
    \centering
    \includegraphics[width=0.4\textwidth]{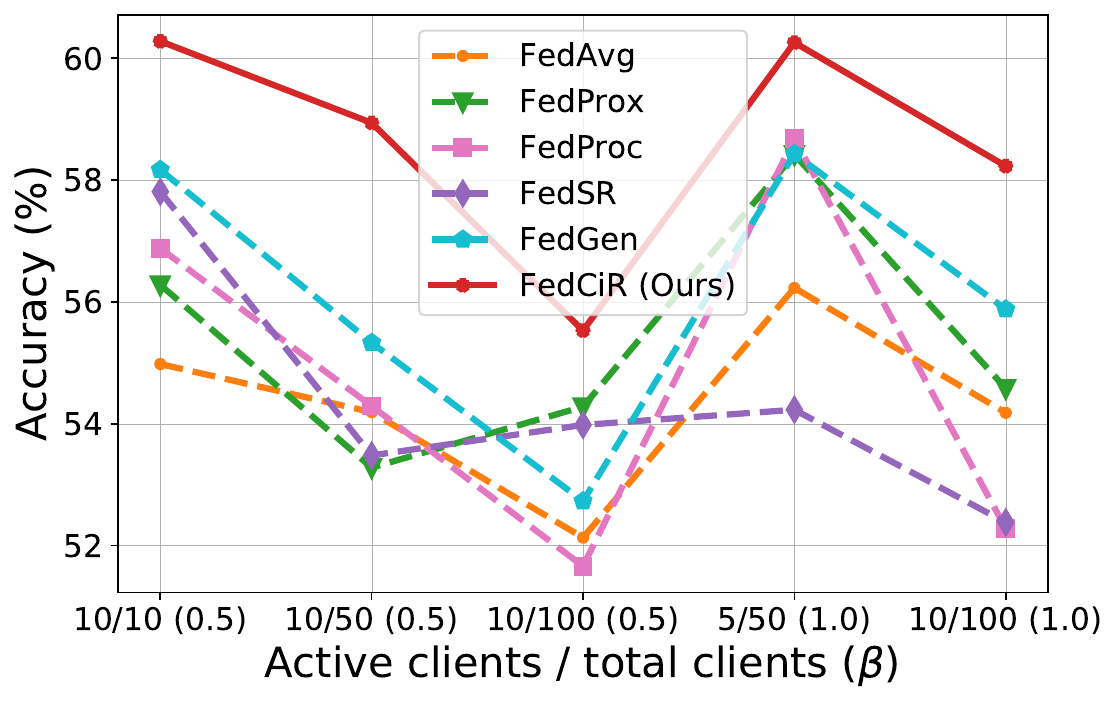}
    \caption{Test accuracy ($\%$) of different methods with varying client number and active client ratio on PACS. The values of Dirichlet parameter $\beta$ are given in parentheses.}
    \label{Fig.client_sel__pacs}
\end{figure}

\begin{table}[h]
    \centering
    \resizebox{0.48\textwidth}{!}{
    \begin{tabular}{c||ccccc}
    \hline
    \rowcolor[gray]{0.8}\multicolumn{6}{c}{\textbf{Robustness of Hyperparameters (E=100) }}\\
    \hline \hline
       $ \lambda_{reg}$ ($\lambda_{align}$=5e-6) & 0.0 & 0.1 & 0.3 & 0.5 & 1.0  \\
       Accuracy ($\%$)  & $63.53$ & $67.73$ & $65.98$ & $67.58$ & $67.53$ \\
    \hline \hline
    $ \lambda_{align}$ ($\lambda_{reg}$=0.5) & 0.0 & 1e-6 & 5e-6 & 1e-5 & 5e-5 \\
    Accuracy ($\%$) & $63.83$ & $64.43$ & $67.58$& $67.47$ & $67.39$ \\
    \hline
    \end{tabular}}
        \caption{Test accuracy ($\%$) of FedCiR using varying hyperparameters.}
    \label{tab:hyper}
\end{table}

\textbf{Data heterogeneity:}
In addition to the feature-skewed scenarios, we further evaluate the performance of all the methods in various label-skewed settings on DomainNet and PACS datasets. 
As shown in Fig. \ref{Fig.hete_domainnet} and Fig. \ref{Fig.hete_pacs}, FedCiR outperforms the baselines in all the scenarios, which demonstrates that our proposed method can handle all kinds of non-IID scenarios, including those with feature-skewed and label-skewed data distributions.
As the non-IID degree decreases, i.e., as $\beta$ increases, the test accuracy of all the methods improves as expected.
Moreover, FedCiR obtains competitive results under highly-skewed scenarios, e.g., it achieves nearly $5\%$ higher accuracy than the second-best method on PACS when $\beta=0.5$.


\subsection{Sensitivity Analysis}
\label{ablation study}
\textbf{Impact of client number and active ratio:}
We investigate the impact of client number and active ratio for FedCiR and the baselines.
Fig \ref{Fig.client_sel_domainnet} and Fig. \ref{Fig.client_sel__pacs} illustrate that, under different total client and active client numbers, FedCiR achieves the best performance in all the settings.
For example, FedCiR outperforms the second-best algorithm by 3.5$\%$ on PACS when the total client number is 50 and the active client number is 10.
Notably, the accuracy of FedProc and FedSR is sensitive to the numbers of clients due to the unstable prototypes and representation networks averaged over them.
In contrast, FedCiR maintains stable and competitive performance across all the settings, highlighting the effectiveness and stability of client-invariant representation learning.

\begin{figure}
    \centering
    \includegraphics[width=0.4\textwidth]{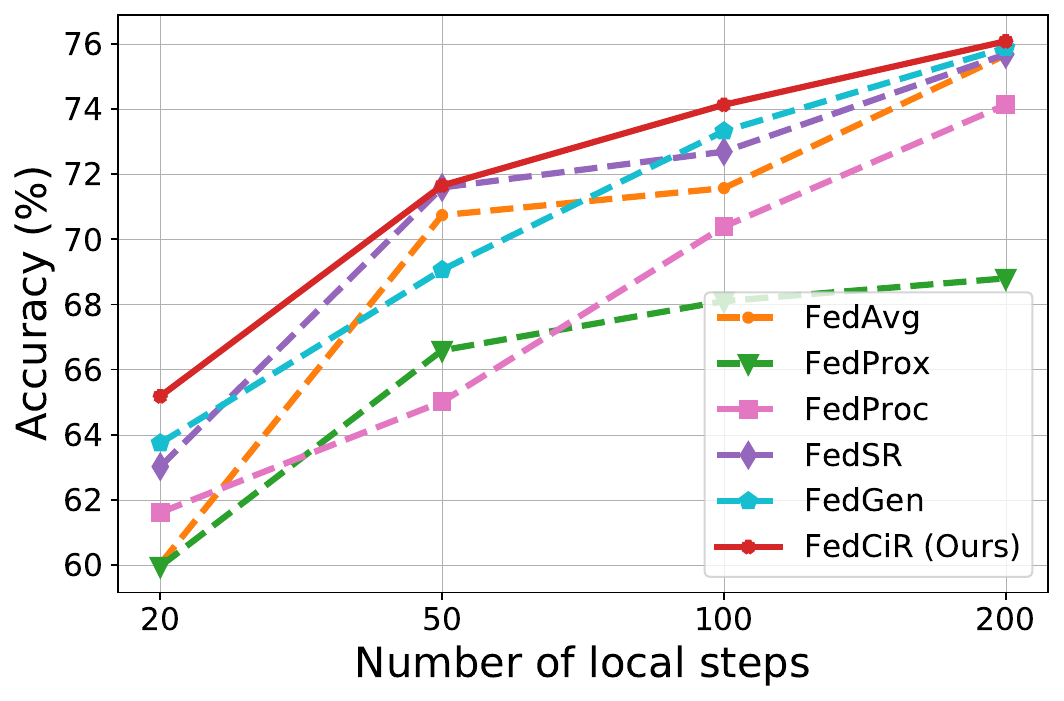}
    \caption{Test accuracy ($\%$) of different methods with varying local updating steps on DomainNet.}
    \label{Fig.local_epoch_domainnet}
\end{figure}

\begin{figure}
    \centering
    \includegraphics[width=0.4\textwidth]{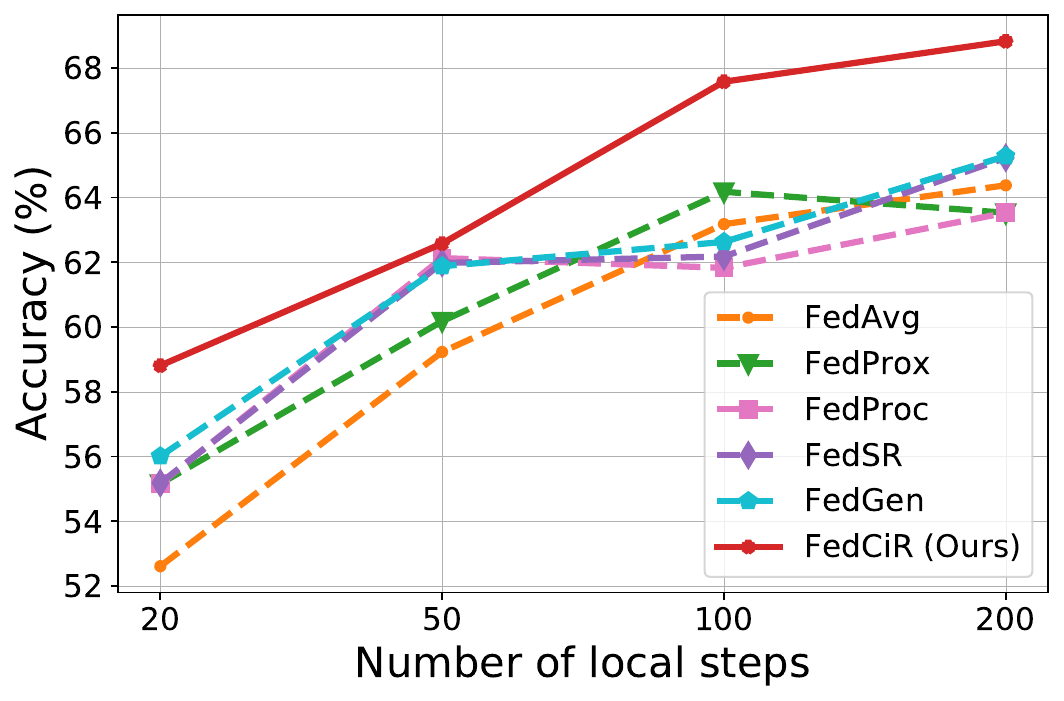}
    \caption{Test accuracy ($\%$) of different methods with varying local updating steps on PACS.}
    \label{Fig.local_epoch_pacs}
\end{figure}

\textbf{Impact of local steps:}
We study the effect of the number of local updating steps on the accuracy of all the methods on DomainNet and PACS datasets.
As depicted in Fig. \ref{Fig.local_epoch_domainnet} and Fig. \ref{Fig.local_epoch_pacs}, appropriate large values of local updating steps promote the accuracy performance for all the methods.
Furthermore, FedCiR is robust against all the baselines at different levels of local updating steps, particularly on the PACS dataset.
Notably, when the number of local steps exceeds 100, FedCiR outperforms the second-best baseline by more than 3$\%$, indicating its high training reliability and efficiency. 


\textbf{Robustness of hyperparameters:}
To evaluate the influence of hyperparameter selection, we conduct experiments with different hyperparameters in FedCiR on PACS.
Results in Table \ref{tab:hyper} show that FedCiR is insensitive to the hyperparameters $\lambda_{reg}$, achieving an accuracy of about $68\%$ with $\lambda_{reg} \in [0.1,1.0]$ and $\lambda_{align}=5 \times 10^{-6}$.
Without the proposed regularization term $\mathcal{L}_{reg}$, i.e., $\lambda_{reg}=0.0$, FedCiR only achieves 63.53$\%$ accuracy, highlighting the effectiveness of our proposed term for knowledge augmentation.
We also find that selecting an appropriate large value for $\lambda_{align}$, e.g., $\lambda_{align}> 1\times 10^{-6}$, yields better performance for FedCiR, whereas using an extremely small value of $\lambda_{align}$ cannot extract sufficient client-invariant features for classification, resulting an accuracy of only $63.83\%$ when $\lambda_{align}=0$.


\begin{figure}
    \centering
    \includegraphics[width=0.4\textwidth]{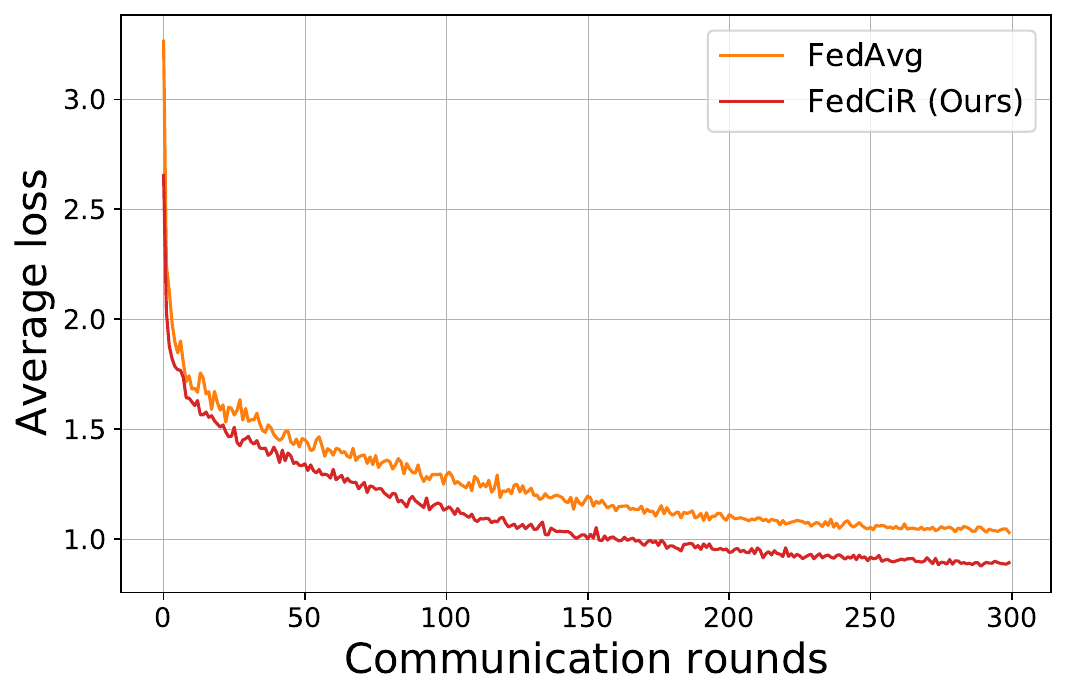}
    \caption{Average local risks on DomainNet.}
    \label{Fig.individual_error}
\end{figure}

\begin{figure}
    \centering
    \includegraphics[width=0.4\textwidth]{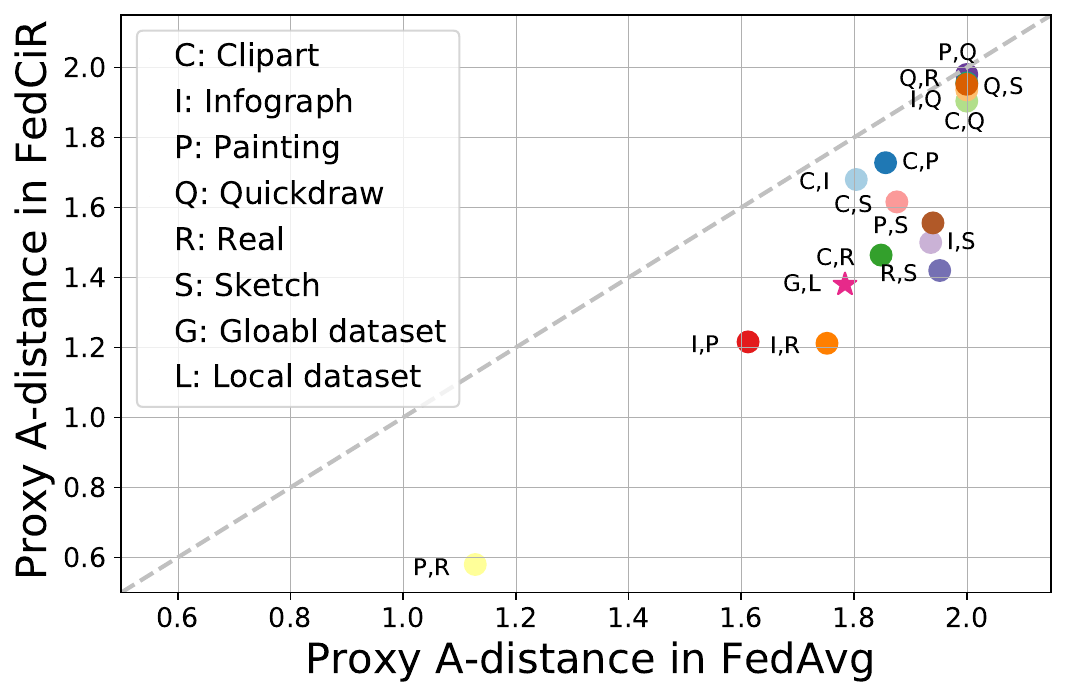}
    \caption{Proxy A-distance (PAD) between any pairs of datasets on DomainNet. \emph{Pink star:} Average PAD between the global mixture dataset and all the local datasets. \emph{Circles:} PAD between any pairs of local datasets. The positions of the dots below the dashed diagonal indicate that FedCiR achieves a smaller PAD in FedCiR compared with FedAvg.}
    \label{Fig.h_div}
\end{figure}

\subsection{Insight Analysis}
\label{insight}
To verify the effectiveness of FedCiR in client-invariant representation learning as discussed in Theorem \ref{theorem_2}, we empirically calculate the reduction of local risks and $\mathcal{H}$-divergence between local and global datasets compared with FedAvg on DomainNet.
Similar to \cite{domain_adaptation_1,theory_da}, we approximate the $\mathcal{H}$-divergence by discriminating a pair of datasets using a classifier.
Specifically, we label two datasets $\mathcal{D}_i$ and $\mathcal{D}_j$ as 0 and 1, respectively, and train a classifier, e.g., a linear SVM, to distinguish between them and calculate the test error $\epsilon$.
The proxy A-distance (PAD) between $\mathcal{D}_i$ and $\mathcal{D}_j$, defined as $2(1-2 \epsilon)$, is used to approximate the $\mathcal{H}$-divergence.

Fig. \ref{Fig.individual_error} presents the comparison of the average local risks of clients on DomainNet for FedCiR and FedAvg. 
The results show that FedCiR consistently outperforms FedAvg in terms of lower risks in clients.
This is because the client-invariant representations induced by FedCiR reduce the discrepancy among local models.
Consequently, the aggregated global model becomes more generalized for local datasets, resulting in lower local errors.
Furthermore, as shown in Fig. \ref{Fig.h_div}, the average PAD between global and local datasets is smaller in FedCiR than in FedAvg (i.e., the dot is below the diagonal), which verifies the effectiveness of FedCiR in reducing the $\mathcal{H}$-divergence between global and local datasets. 
Moreover, the PAD between any pairs of local datasets is also lower in FedCiR than in FedAvg, explicitly demonstrating that FedCiR can extract more client-invariant representations compared with FedAvg.

\subsection{Overhead Analysis}
\label{overhead}
Finally, we evaluate the overhead regarding computational and communication costs for each client as well as GPU memories to reach $60\%$ accuracy for different methods on DomainNet. The results are presented in Table \ref{tab:overhead}.

\textbf{Computational costs:} Our proposed FedCiR spends less computational costs to reach $60\%$ accuracy against baselines.
Although FedCiR includes the feature generator and global representation distributions for local regularization, which increases the computational costs in each training round, the faster convergence rate significantly reduces the training rounds for FedCiR to reach $60\%$ accuracy, thus relieving the total computational costs for clients.

\textbf{Communication costs:} FedSR achieves the best performance and FedCiR realizes the second-best performance in communication cost saving.
This is because both of them achieve remarkably faster convergence rates compared with other methods, and additionally downloading the feature generator from the server takes FedCiR more communication costs than FedSR at clients.

\textbf{GPU memory:} Serving as a basic framework for FL, FedAvg occupies the least GPU memories for computation.
The other methods require more GPU memories for computation due to the extra components for local regularization, e.g., the global representations in FedProc and FedSR.
FedGen and FedCiR occupy the most GPU memories for computation, e.g., with 0.14GB more than FedAvg.
This is because they utilize a feature generator for regulating the local update and thus demand more GPU memories.
Even so, compared with the $7.71$GB GPU memories for FedAvg, the additional GPU memories (i.e., 0.14GB) for local regularization in FedCiR are small and negligible.

To sum up, our proposed FedCiR is able to save computational and communication costs compared with most of the baselines, which is attributed to the fast convergence rate. Moreover, the additional regularizers for solving the feature shift problem only increase negligible GPU memories over the baselines.

\begin{table}[h]
    \centering
    \resizebox{0.48\textwidth}{!}{
    \begin{tabular}{c||ccc}
    \hline
    \rowcolor[gray]{0.8}\multicolumn{4}{c}{\textbf{Overhead Analysis }} \\ \hline \hline
        {\multirow{2}{*}{{Algorithms}} } &  {\multirow{2}{*}{\makecell{Computation \\ (TFLOPs)}}} &   {\multirow{2}{*}{\makecell{Communication \\ (GB)}}} &  {\multirow{2}{*}{\makecell{GPU memory \\ (GB)}}} \\ \\ \hline \hline
         FedAvg & $84.08$ & $5.30$ & $\bm{7.71}$  \\
         FedProx & $84.77$ & $5.34$ & $7.82$ \\
         FedProc & $79.21$ & $4.99$ & \underline{$7.72$} \\
         FedGen & \underline{$49.37$} & $4.29$ & $7.85$ \\
         FedSR & $55.04$ & $\bm{3.28}$ & $7.80$ \\
         FedCiR & $\bm{42.31}$ &  \underline{$3.68$}  & $7.85$ \\ \hline
    \end{tabular}}
    \caption{Overhead analysis of varying algorithms for each client to reach $60\%$ accuracy in computation, communication, and GPU memory on DomainNet. The results in \textbf{bold}
indicate the best performance in overhead saving and the second-best results are \underline{underlined}.}
    \label{tab:overhead}
\end{table}

\section{Conclusions}
\label{conclusions}
In this paper, we proposed a novel federated representation learning framework, called FedCiR, to solve the feature-skewed issue in FL.
Based on our established generalization error bound for FL, we proposed to extract informative and client-invariant features by utilizing two mutual information terms.
FedCiR incorporated two regularization terms to bound these two mutual information terms with the global representation distribution, which not only improves global knowledge but also facilitates client-invariant representation training.
To mitigate the issue of the absence of global representation distribution in FL, we employed a generator to approximate it through a data-free mechanism.
Experiments verified the effectiveness of FedCiR in client-invariant representation extraction and solving the data heterogeneity issue.
In the future, we will investigate how to utilize pretrained generative models to solve the federated feature shift problem by generating the insufficient domains of data for devices.

\bibliographystyle{IEEEtran}
\bibliography{bibliography}

\begin{IEEEbiography}[{\includegraphics[width=1in,height=1.25in,clip,keepaspectratio]{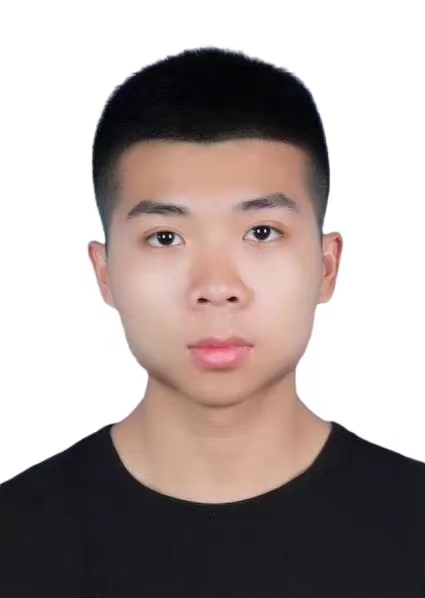}}]{Zijian Li}
(Graduate student member, IEEE) received the B.Eng. degree in Electrical Engineering and Automation from the South China University of Technology in 2020, and the M.Sc. degree in electronic and information engineering from the Hong Kong Polytechnic University in 2022.
He is currently pursuing a Ph.D. degree in the Department of Electronic and Computer Engineering at the Hong Kong University of Science and Technology.
His research interest is federated learning.
\end{IEEEbiography}
\vskip -2\baselineskip plus -1fil

\begin{IEEEbiography}[{\includegraphics[width=1in,height=1.25in,clip,keepaspectratio]{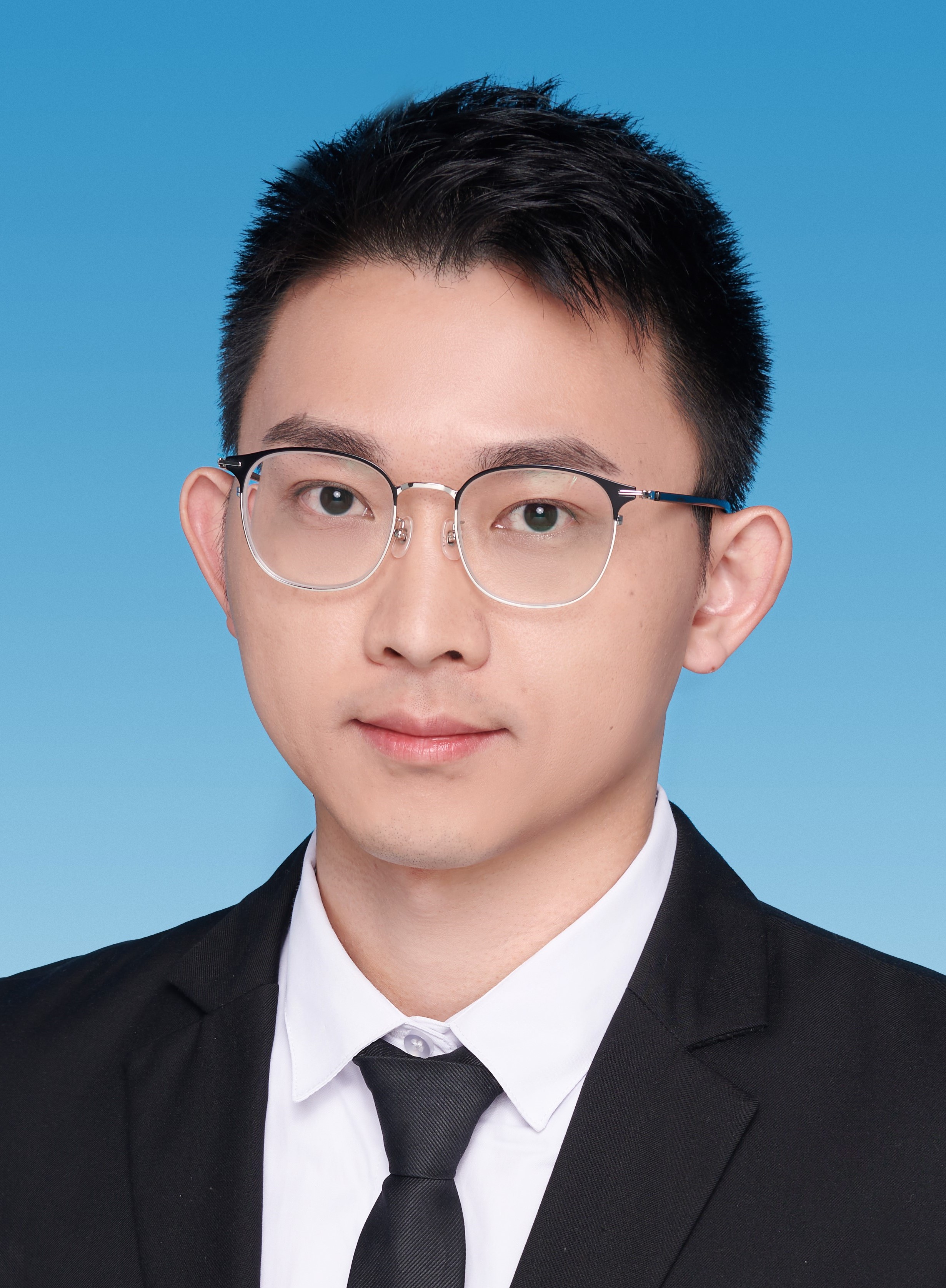}}]{Zehong Lin}
(Member, IEEE) received the B.Eng. degree in information engineering from South China University of Technology in 2017, and the Ph.D. degree in information engineering from The Chinese University of Hong Kong in 2022. Since 2022, he has been with the Department of Electronic and Computer Engineering, The Hong Kong University of Science and Technology, where he is currently a Research Assistant Professor. His research interests include federated learning and edge AI. 
\end{IEEEbiography}
\vskip -2\baselineskip plus -1fil

\begin{IEEEbiography}[{\includegraphics[width=1in,height=1.25in,clip,keepaspectratio]{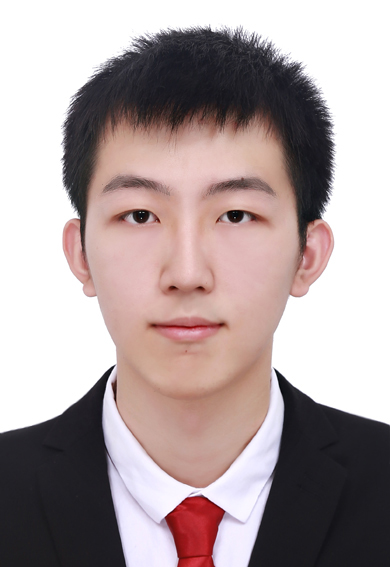}}]{Jiawei Shao}
(Graduate student member, IEEE) received the B.Eng. degree in telecommunication engineering from Beijing University of Posts and Telecommunications in 2019.
He is currently pursuing a Ph.D. degree in the Department of Electronic and Computer Engineering at the Hong Kong University of Science and Technology.
His research interests include edge intelligence and federated learning.
\end{IEEEbiography}
\vskip -2\baselineskip plus -1fil

\begin{IEEEbiography}[{\includegraphics[width=1in,height=1.25in,clip,keepaspectratio]{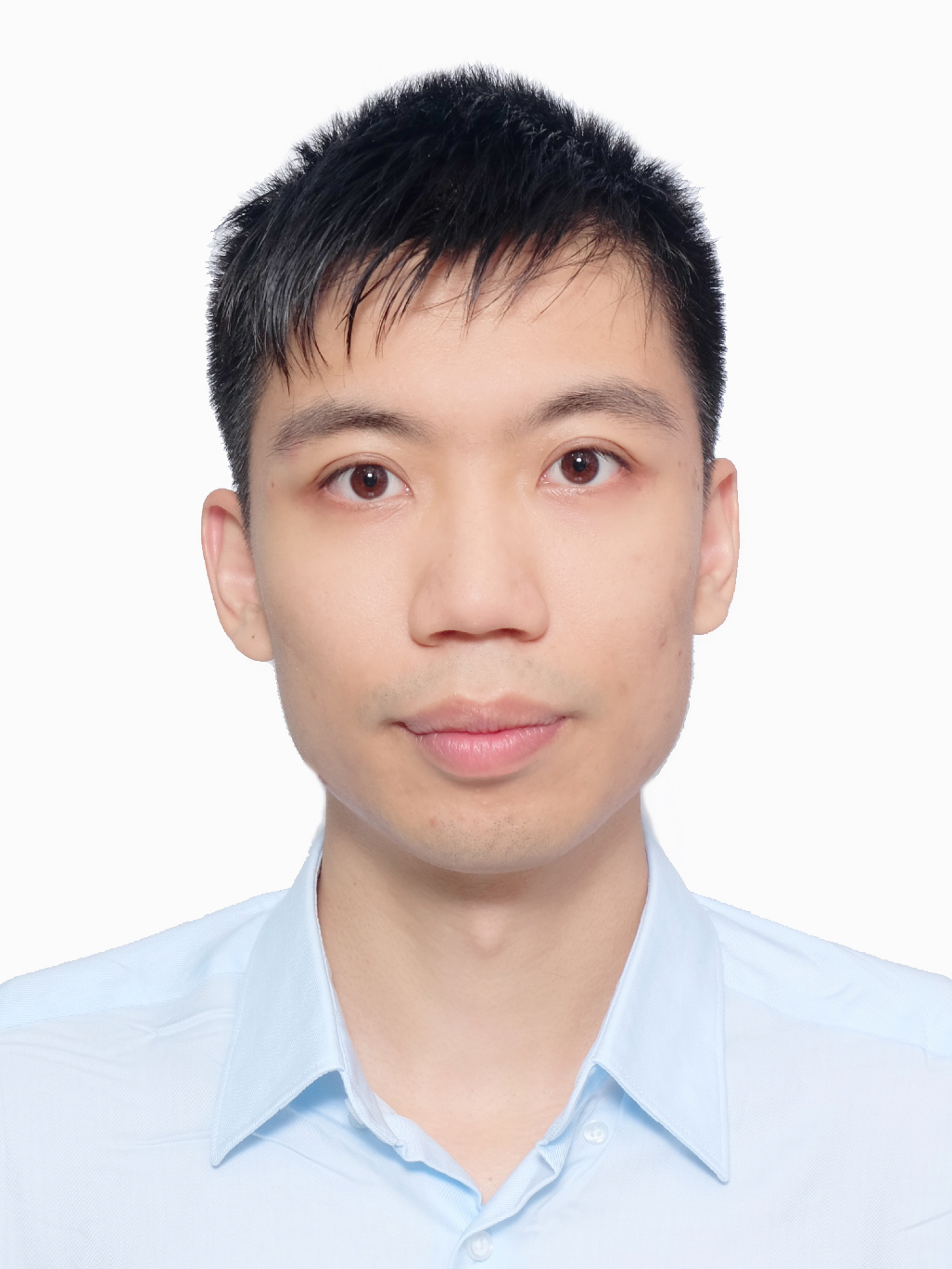}}]{Yuyi Mao}
(Member, IEEE) received the B.Eng. degree in information and communication engineering from Zhejiang University, Hangzhou, China, in 2013, and the Ph.D. degree in electronic and computer engineering from The Hong Kong University of Science and Technology, Hong Kong, in 2017. He was a Lead Engineer with the Hong Kong Applied Science and Technology Research Institute Co., Ltd., Hong Kong, and a Senior Researcher with the Theory Lab, 2012 Labs, Huawei Tech. Investment Co., Ltd., Hong Kong. He is currently a Research Assistant Professor with the Department of Electrical and Electronic, The Hong Kong Polytechnic University, Hong Kong. His research interests include wireless communications and networking, mobile-edge computing and learning, and wireless artificial intelligence.

He was the recipient of the 2021 IEEE Communications Society Best Survey Paper Award and the 2019 IEEE Communications Society and Information Theory Society Joint Paper Award. He was also recognized as an Exemplary Reviewer of the IEEE Wireless Communications Letters in 2021 and 2019 and the IEEE Transactions on Communications in 2020. He is an Associate Editor of the EURASIP Journal on Wireless Communications and Networking.
\end{IEEEbiography}
\vskip -2\baselineskip plus -1fil

\begin{IEEEbiography}[{\includegraphics[width=1in,height=1.25in,clip,keepaspectratio]{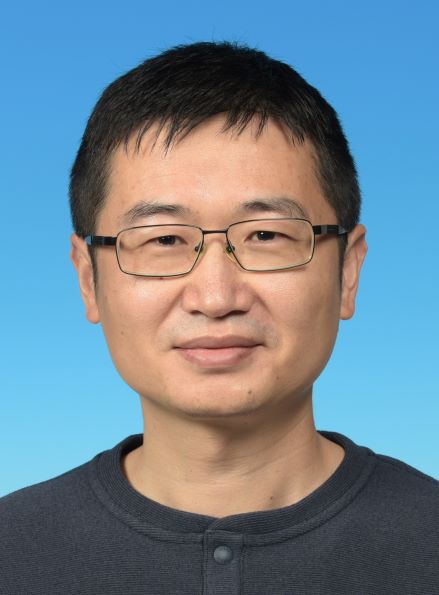}}]{Jun Zhang}

(Fellow, IEEE) received the B.Eng. degree in Electronic Engineering from the University of Science and Technology of China in 2004, the M.Phil. degree in Information Engineering from the Chinese University of Hong Kong in 2006, and the Ph.D. degree in Electrical and Computer Engineering from the University of Texas at Austin in 2009. He is an Associate Professor in the Department of Electronic and Computer Engineering at the Hong Kong University of Science and Technology. His research interests include wireless communications and networking, mobile edge computing and edge AI, and cooperative AI.

Dr. Zhang co-authored the book Fundamentals of LTE (Prentice-Hall, 2010). He is a co-recipient of several best paper awards, including the 2021 Best Survey Paper Award of the IEEE Communications Society, the 2019 IEEE Communications Society \& Information Theory Society Joint Paper Award, and the 2016 Marconi Prize Paper Award in Wireless Communications. Two papers he co-authored received the Young Author Best Paper Award of the IEEE Signal Processing Society in 2016 and 2018, respectively. He also received the 2016 IEEE ComSoc Asia-Pacific Best Young Researcher Award. He is an Editor of IEEE Transactions on Communications, IEEE Transactions on Machine Learning in Communications and Networking, and was an editor of IEEE Transactions on Wireless Communications (2015-2020). He served as a MAC track co-chair for IEEE Wireless Communications and Networking Conference (WCNC) 2011 and a co-chair for the Wireless Communications Symposium of IEEE International Conference on Communications (ICC) 2021. He is an IEEE Fellow and an IEEE ComSoc Distinguished Lecturer.
\end{IEEEbiography}
\vskip -2\baselineskip plus -1fil

\clearpage
\appendices

\section{Proof of Theorem \ref{theorem_2}}

\label{theorem_2_proof}
Consider the FL framework that collaboratively trains a global hypothesis $\hat{\bm{h}}$ to classify representations with $|\mathcal{K}|$ distributed source domains $\{\langle \mathcal{R}_{k}, \bm{r}_{k} \rangle \}_{k \in \mathcal{K}}$.
Since the global hypothesis is computed by a weighted sum for all the local hypotheses trained on local source domains, the generalization error bound can be computed as follows:
\begin{equation}
\label{fl_hypothesis}
    \varepsilon (\hat{\bm{h}}) 
     \equiv \varepsilon \left(\sum_{k \in \mathcal{K}} p(k) \hat{\bm{h}}_k\right).
\end{equation}
We use a one-layer fully-connected layer followed by a softmax layer as the hypothesis $\hat{\bm{h}}$ and the cross-entropy function to compute the risk $\varepsilon(\hat{\bm{h}})$, which is convex for any source data.
Combining with Theorem \ref{theorem_1}, we can bound \eqref{fl_hypothesis} as below:
\begin{align}
    \varepsilon (\hat{\bm{h}}) 
    \equiv& \varepsilon \left(\sum_{k \in \mathcal{K}} p(k) \hat{\bm{h}}_k \right)
    \leq \sum_{k \in \mathcal{K}} p(k) \varepsilon (\hat{\bm{h}}_k) \\
    \leq& \sum_{k \in \mathcal{K}} p(k) \varepsilon_{{k}} (\hat{\bm{h}}_k) + \sum_{k \in \mathcal{K}} p(k) d_{\hat{\mathcal{G}}} (\mathcal{R}_{k}, \Tilde{\mathcal{R}})  \notag \\
    &+ \sum_{k \in \mathcal{K}} p(k) \min \{ \mathbb{E}_{\mathcal{D}_{k}} [|\bm{r}_{k} - \Tilde{\bm{r}}|], \mathbb{E}_{\mathcal{D}_T} [|\bm{r}_{k} - \Tilde{\bm{r}}|] \}, \label{covariate}
\end{align}
where $\hat{\mathcal{G}}$ is the conditional function class for the hypotheses $\hat{\bm{g}}$ that $\hat{\bm{g}} (\bm{z}) \coloneqq \text{sign} (| \hat{\bm{h}} (\bm{z}) - \hat{\bm{h}}^\prime (\bm{z})| -m )$ with ${\hat{\bm{h}}}, {\hat{\bm{h}}}^\prime \in {\hat{\mathcal{H}}}$ and $0 \leq m \leq 1$, and $\mathcal{R}_{k}$ and $\Tilde{\mathcal{R}}$ are the induced representation distributions of $\mathcal{D}_{k}$ and $\Tilde{\mathcal{D}}$, respectively.

Under the {covariate shift} assumption that $\Tilde{\bm{r}}=\bm{r}_k$, the third term in (\ref{covariate}) equals 0. Hence, we obtain
\begin{align}
    &\varepsilon (\hat{\bm{h}}) 
    \leq \sum_{k \in \mathcal{K}} p(k) \varepsilon_{{k}} (\hat{\bm{h}}_k) + \sum_{k \in \mathcal{K}} p(k) d_{\hat{\mathcal{G}}} (\mathcal{R}_{k}, \Tilde{\mathcal{R}}).
\end{align}

\section{Proof of Proposition \ref{pro_3}}
\label{pro_3_proof}

By extending the mutual information term between labels and representations, we have
\begin{align}
    - I (Y;Z) &= - \int_Y \int_Z p(y,\bm{z}) \text{log} \frac{p(y,\bm{z})}{p(y) p(\bm{z})} d\bm{z} dy \\
     &= - \int_Y \int_Z p(y,\bm{z}) \text{log} p(y|\bm{z}) d\bm{z} dy + H(Y). \label{negative_mutual_information}
\end{align}
Notice that the entropy of labels $H(Y)$ is independent of model optimization. Thus, $H(Y)$ can be ignored in \eqref{negative_mutual_information} and we have
\begin{align}
     -I(Y;Z) &\equiv \int_Y \int_Z p(y,\bm{z}) [- \text{log} p(y|\bm{z})] d\bm{z} dy \\
     &= \sum_{k \in \mathcal{K}} p(k) \int_Y \int_Z p(y,\bm{z}) [- \text{log} {p}(y|\bm{z})] d\bm{z} dy.
\end{align}

Since the conditional distribution $p(y|\bm{z})$ is intractable in FL, we use the local predition distribution $\hat{p}(y|k,\bm{z})$ as a variational approximation to $p(y|\bm{z})$. According to the fact that the KL divergence is always positive, we have
\begin{align}
    &D_{\text{KL}} (p(Y|Z) || \hat{p}(Y|K,Z)) \geq 0 \\
    \Rightarrow &\int_Y \int_Z p(y,\bm{z}) \text{log} \frac{p(\bm{z}|y)}{\hat{p}(\bm{z}|k,y)} d\bm{z} dy \geq 0.
\end{align}
As the clients download the global model for local training every communication round, it is reasonable to use $\hat{p}(y|k,\bm{z})$ to approximate $p(y|\bm{z})$, and the distribution gap between them is significantly tight when the local steps are small.
Hence, we have
\begin{align}
    & \quad -I(Y;Z) \notag \\
    &\leq \sum_{k \in \mathcal{K}} p(k) \int_Y \int_Z  p(y,\bm{z}) [- \text{log} \hat{p}(y|k,\bm{z})] d\bm{z} dy \\
    &\leq \sum_{k \in \mathcal{K}} p(k) \int_Y p(y) \int_Z p(\bm{z}|y) [- \text{log} \hat{p}(y|k,\bm{z})] d\bm{z} dy.
\end{align}
By using $q(\bm{z}|y)$ to approximate the global representation distribution $p(\bm{z}|y)$, we have
\begin{align}
    & \quad -I(Y;Z) \notag \\
    &\leq \sum_{k \in \mathcal{K}} p(k) \int_Y p(y) \int_Z p(\bm{z}|y) [- \text{log} \hat{p}(y|k,\bm{z})] d\bm{z} dy \\
    &\approx \sum_{k \in \mathcal{K}} p(k) \int_Y p(y) \int_Z q(\bm{z}|y) [- \text{log} \hat{p}(y|k,\bm{z})] d\bm{z} dy \\
    &\leq \sum_{k \in \mathcal{K}} p(k) \mathbb{E}_{p(y)} \mathbb{E}_{q(\bm{z}|y)} [- \log \hat{p}(y|k,\bm{z})]
    =\mathcal{L}_{reg}.
\end{align}

\section{Proof of Proposition \ref{remark_1}}

\label{re_1_proof}
To prove that given a label class $y$, a representation $\bm{z}$ is client-invariant if and only if $I(\bm{z}; k | y)=0$, we establish the following separate proofs:
    \begin{itemize}
    \item Given a class $y$, if $I(\bm{z};k|y)=0$, then $p(\bm{z}|y)=p(\bm{z}|k,y)$, so that $p(\bm{z}|k,y)$ is invariant w.r.t. $k$.
    \item Given a class $y$, if $p(\bm{z}|k,y)$ is invariant w.r.t. $k$, then $\forall \bm{z}, k:$
    \begin{align}
        p(\bm{z}|y) &= \sum_{k \in \mathcal{K}} p(\bm{z}|k,y) p(k|y) \\
        &= \sum_{k \in \mathcal{K}} p(\bm{z}|k^\prime ,y) p(k|y) \\
        &= p(\bm{z}|k^\prime ,y) \sum_{k \in \mathcal{K}} p(k|y) \\
        &= p(\bm{z}|k^\prime ,y) \sum_{k \in \mathcal{K}}  \frac{p(k,y)}{p(y)} \\
        &= p(\bm{z}| k^\prime , y) \\
        &\Rightarrow I(\bm{z};k|y)=0.
    \end{align}
\end{itemize}

\section{Proof of Proposition \ref{proposition_1}}
\label{pro_4_proof}

By extending the information term between representations and clients conditioned on the labels, we have
    \begin{align}
        & \qquad  I (Z;K|Y) \notag\\
        &= \int_Y \sum_{k \in \mathcal{K}} \int_Z p(k,y,\bm{z}) \log \frac{p(k,\bm{z}|y)}{p(k|y)p(\bm{z}|y)} d\bm{z} dy \\
        &= \sum_{k \in \mathcal{K}}\int_Y \int_Z p(k,y) p(\bm{z}|k,y) \log \frac{p(\bm{z}|k,y)}{p(\bm{z}|y)} d\bm{z} dy.
    \end{align}
Similarly, due to the intractable $p(\bm{z}|y)$, we use $q(\bm{z}|y)$ to approximate it.
According to the fact that KL divergence is always positive, we have
\begin{align}
    & D_{\text{KL}} (p(Z|Y) || q(Z|Y)) \geq 0 \\
    \Rightarrow& \int_Y \int_Z p(\bm{z},y) \text{log} \frac{p(\bm{z}|y)}{q(\bm{z}|y)} d\bm{z} dy \geq 0 \\
    \Rightarrow & \sum_{k \in \mathcal{K}} \int_Y\int_Z p(k,y,\bm{z}) \log \frac{p(\bm{z}|y)}{q(\bm{z}|y)} d\bm{z} dy \geq 0 \\ 
    \Rightarrow & \sum_{k \in \mathcal{K}} \int_Y \int_Z p(k,y,\bm{z})  \log p(\bm{z}|y) d\bm{z} dy \notag\\
    & \quad \quad\geq  \sum_{k \in \mathcal{K}} \int_Y \int_Z p(k,y,\bm{z}) \text{log} q(\bm{z}|y) d\bm{z} dy. \label{39}
\end{align}
Then, we have
\begin{align}
    & \qquad I (Z,K|Y) \notag \\
    &\leq \sum_{k \in \mathcal{K}} \int_Y \int_Z p(k,y) p(\bm{z}|k,y) \log \frac{p(\bm{z}|k,y)}{q(\bm{z}|y)} d\bm{z} dy \\
    &= \sum_{k \in \mathcal{K}} \int_Y p(k,y) \int_Z p(\bm{z}|k,y) \log \frac{p(\bm{z}|k,y)}{q(\bm{z}|y)} d\bm{z} dy.
\end{align}
Let $m=p(\bm{z}|k,y,\bm{x})$ and $f(m)=m \log (m) (m>0)$.
Since $f(m)$ is a convex function, according to the Jensen's Inequality, we obtain:
\begin{align}
    &p(\bm{z}|k,y) \log p(\bm{z}|k,y) \notag \\
    \leq \int_X& p(\bm{x}|k,y) p(\bm{z}|k,y,\bm{x}) \log p(\bm{z}|k,y,\bm{x}).
\end{align}
By including it into equation \eqref{39}, we get
\begin{align}
    & \qquad I (Z,K|Y) \notag \\
    &\leq \sum_{k \in \mathcal{K}} \int_X \int_Y \int_Z p(k,\bm{x},y,\bm{z}) \log  \frac{p(\bm{z}|k,y,\bm{x})}{q(\bm{z}|y)} d\bm{z} dy d\bm{x} \\
    &\leq  \sum_{k \in \mathcal{K}} p(k) \mathbb{E}_{p(\bm{x},y|k)}  D_{\text{KL}} \big(p(\bm{z}|k,\bm{x}) \| q(\bm{z}|y) \big) \\
    &= \mathcal{L}_{align}.
\end{align}

\section{Hyperparameters}
\label{hyper}
In the experiments, we tune the hyperparameters for our proposed method and the baselines and report the best results.
Specifically, we tune the hyperparameters for FedProx ($\mu$) over $\{0.1,0.5,0.9\}$. For FedGen, we tune the hyperparameters ($\lambda_{kd}, \lambda_{reg}$) over $\{(0.5,1.0), (1.0,0.5), (1.0,1.0)\}$. For FedSR, we search the hyperparameters ($\alpha^{L2R},\alpha^{CMI}$) over $\{(5\times 10^{-7},0), (6\times 10^{-1},0), (5\times 10^{-5},0), (1\times 10^{-6},1\times 10^{-2}),(1\times 10^{-6},1\times 10^{-3})\}$.
For FedCiR, we tune the hyperparameters ($\lambda_{reg},\lambda_{align}$) among $\{ (5\times 10^{-7},0.5), (1\times 10^{-6},0.5), (5\times 10^{-5},0.5) \}$.
The hyperparameters for the experiments in Table \ref{main_results} are shown as follows:

\textbf{DomainNet:}
\begin{itemize}
    \item FedProx: $\mu=0.1$
    \item FedGen: $\lambda_{kd}=1.0$, $\lambda_{reg}=0.5$
    \item FedSR: $\alpha^{L2R}=0$, $ \alpha^{CMI}=5\times 10^{-7}$
    \item FedReg: $\lambda_{align}=0$, $ \lambda_{reg}=0.5$
    \item FedAlign: $\lambda_{align}=1\times 10^{-6}$, $ \lambda_{reg}=0$
    \item FedCiR: $\lambda_{align}=1\times 10^{-6}$, $ \lambda_{reg}=0.5$
\end{itemize}

\textbf{PACS:}
\begin{itemize}
    \item FedProx: $\mu=0.1$
    \item FedGen: $\lambda_{kd}=1.0$, $ \lambda_{reg}=0.5$
    \item FedSR: $\alpha^{L2R}=0$, $ \alpha^{CMI}=1\times 10^{-6}$
    \item FedReg: $\lambda_{align}=0$, $ \lambda_{reg}=0.5$
    \item FedAlign: $\lambda_{align}=5\times 10^{-7}$, $ \lambda_{reg}=0$
    \item FedCiR: $\lambda_{align}=5\times 10^{-7}$, $ \lambda_{reg}=0.5$
\end{itemize}

\textbf{Office-Caltech-10:}
\begin{itemize}
    \item FedProx: $\mu=0.5$
    \item FedGen: $\lambda_{kd}=0.5$, $ \lambda_{reg}=1.0$
    \item FedSR: $\alpha^{L2R}=0$, $ \alpha^{CMI}=1\times 10^{-6}$
    \item FedReg: $\lambda_{align}=0$, $ \lambda_{reg}=0.5$
    \item FedAlign: $\lambda_{align}=5\times 10^{-7}$, $ \lambda_{reg}=0$
    \item FedCiR: $\lambda_{align}=5\times 10^{-7}$, $ \lambda_{reg}=0.5$
\end{itemize}

\textbf{Camelyon17:}
\begin{itemize}
    \item FedProx: $\mu=0.9$
    \item FedGen: $\lambda_{kd}=1.0$, $ \lambda_{reg}=1.0$
    \item FedSR: $\alpha^{L2R}=0$, $ \alpha^{CMI}=5\times 10^{-7}$
    \item FedReg: $\lambda_{align}=0$, $ \lambda_{reg}=0.5$
    \item FedAlign: $\lambda_{align}=5\times 10^{-7}$, $ \lambda_{reg}=0$
    \item FedCiR: $\lambda_{align}=5\times 10^{-7}$, $ \lambda_{reg}=0.5$
\end{itemize}

\textbf{Iwildcam:}
\begin{itemize}
    \item FedProx: $\mu=0.1$
    \item FedGen: $\lambda_{kd}=1.0$, $ \lambda_{reg}=1.0$
    \item FedSR: $\alpha^{L2R}=0$, $ \alpha^{CMI}=5\times 10^{-7}$
    \item FedReg: $\lambda_{align}=0$, $ \lambda_{reg}=0.5$
    \item FedAlign: $\lambda_{align}=1\times 10^{-6}$, $ \lambda_{reg}=0$
    \item FedCiR: $\lambda_{align}=1\times 10^{-6}$, $ \lambda_{reg}=0.5$
\end{itemize}

\end{document}